\documentclass[manuscript,screen]{acmart}
\usepackage{utils}

\AtBeginDocument{%
  \providecommand\BibTeX{{%
    \normalfont B\kern-0.5em{\scshape i\kern-0.25em b}\kern-0.8em\TeX}}}

\setcopyright{acmcopyright}
\copyrightyear{2018}
\acmYear{2018}
\acmDOI{XXXXXXX.XXXXXXX}





\citestyle{acmauthoryear}

\begin{document}

\title{\revise{Marginal Probability-Based Integer Handling for CMA-ES Tackling Single-and Multi-Objective Mixed-Integer Black-Box Optimization}}

\author{Ryoki Hamano}
\affiliation{%
  \institution{Yokohama National University}
  \city{Yokohama}
  \state{Kanagawa}
  \country{Japan}
}
\email{hamano-ryoki-pd@ynu.jp}

\author{Shota Saito}
\affiliation{%
  \institution{Yokohama National University \& SKILLUP NeXt, Ltd.}
  \city{Yokohama}
  \state{Kanagawa}
  \country{Japan}
}
\email{saito-shota-bt@ynu.jp}

\author{Masahiro Nomura}
\affiliation{%
  \institution{CyberAgent}
  \city{Shibuya}
  \state{Tokyo}
  \country{Japan}
 }
\email{nomura\_masahiro@cyberagent.co.jp}

\author{Shinichi Shirakawa}
\affiliation{%
  \institution{Yokohama National University}
  \city{Yokohama}
  \state{Kanagawa}
  \country{Japan}
}
\email{shirakawa-shinichi-bg@ynu.ac.jp}

\renewcommand{\shortauthors}{Hamano and Saito et al.}

\begin{abstract}
This study targets the mixed-integer black-box optimization (MI-BBO) problem where continuous and integer variables should be optimized simultaneously. The CMA-ES, our focus in this study, is a population-based stochastic search method that samples solution candidates from a multivariate Gaussian distribution (MGD), which shows excellent performance in continuous BBO. The parameters of MGD, mean and (co)variance, are updated based on the evaluation value of candidate solutions in the CMA-ES. If the CMA-ES is applied to the MI-BBO with straightforward discretization, however, the variance corresponding to the integer variables becomes much smaller than the granularity of the discretization before reaching the optimal solution, which leads to the stagnation of the optimization.
In particular, when binary variables are included in the problem, this stagnation more likely occurs because the granularity of the discretization becomes wider, and the existing integer handling for the CMA-ES does not address this stagnation.
To overcome these limitations, we propose a simple integer handling for the CMA-ES based on lower-bounding the marginal probabilities associated with the generation of integer variables in the MGD. The numerical experiments on the MI-BBO benchmark problems demonstrate the efficiency and robustness of the proposed method. Furthermore, in order to demonstrate the generality of the idea of the proposed method, in addition to the single-objective optimization case, we incorporate it into multi-objective CMA-ES and verify its performance on bi-objective mixed-integer benchmark problems.
\end{abstract}


\begin{CCSXML}
<ccs2012>
   <concept>
       <concept_id>10002950.10003714.10003716.10011141</concept_id>
       <concept_desc>Mathematics of computing~Mixed discrete-continuous optimization</concept_desc>
       <concept_significance>500</concept_significance>
       </concept>
 </ccs2012>
\end{CCSXML}

\ccsdesc[500]{Mathematics of computing~Mixed discrete-continuous optimization}

\keywords{covariance matrix adaptation evolution strategy, mixed-integer black-box optimization}

\maketitle

\section{Introduction}
The mixed-integer black-box optimization (MI-BBO) problem is the problem of simultaneously optimizing continuous and integer variables under the condition that the objective function is not explicitly given, which does not allow full access to the gradient.
The MI-BBO problems often appear in real-world applications\footnote{In these applications, the decision variables may include categorical variables, but note that in MI-BBO, the decision variables consist only of continuous and integer variables. In particular, this paper does not deal with categorical variables, except for binary variables, as they cannot necessarily be ordered meaningfully, unlike continuous and integer variables.} such as, material design~\cite{zhang_bayesian_2020, iyer_data_2020}, topology optimization~\cite{yang_evolutionary_1998, fujii_cma-es-based_2018}, placement optimization for $\mathrm{CO_{2}}$ capture and storage~\cite{Miyagi_GECCO2018}, and hyper-parameter optimization of machine learning~\cite{hutter_automated_2019,hazan2018hyperparameter}.
Several algorithms have been designed for MI-BBO so far, e.g., the extended evolution strategies~\cite{li_mixed_2013, MIES1995, MIES2010} and surrogate model-based method~\cite{Bliek_GECCO21}.
Although various BBO methods have been proposed for continuous domain~\cite{Rechenberg:1973, hansen_adapting_1996}, integer domain~\cite{IntEA1994}, and binary domain~\cite{Holland1975AdaptationIN, baluja_population-based_1994}, the BBO methods for mixed-integer problems have not been actively developed.
One common way is applying a continuous BBO method to an MI-BBO problem by discretizing the continuous variables when evaluating a candidate solution rather than using a specialized method for MI-BBO.

The \textit{covariance matrix adaptation evolution strategy} (CMA-ES) \cite{hansen_adapting_1996, hansen_reducing_2003} is a powerful method in continuous black-box optimization, aiming to minimize the objective function through population-based stochastic search. The CMA-ES samples several continuous vectors from a multivariate Gaussian distribution (MGD) and then evaluates the objective function values of the vectors. Subsequently, the CMA-ES facilitates optimization by updating the mean vector, covariance matrix, and step-size (overall standard deviation) based on evaluation values. The CMA-ES exhibits two attractive properties for users. First, it has several invariance properties, such as the invariance of a strictly monotonic transformation of the objective function and an affine transformation (rotation and translation) of the search space. These invariances make it possible to generalize the powerful empirical performance of the CMA-ES of one particular problem to another. Second, the CMA-ES is a quasi-parameter-free algorithm, which allows use without tuning the hyperparameters, such as the learning rate for the mean vector or covariance matrix. In the CMA-ES, all hyperparameters are given default values based on theoretical work and careful experiments.

The most straightforward way to apply the CMA-ES to the MI-BBO is to discretize some elements of the sampled continuous vector, e.g., \cite{TAMILSELVI2014208}.
This approach transforms the original mixed-integer function into a continuous function with a plateau by relaxing the integer variables into the continuous variables.
However, because of the plateau, this simple method may not change the evaluation value by small variations in elements corresponding to the integer variables. Specifically, as pointed out in \cite{hansen_cma-es_2011}, this stagnation occurs when the sample standard deviation of the dimension corresponding to an integer variable becomes much smaller than the granularity of the discretization. In particular, the step-size tends to decrease with each iteration, which promotes trapping on the plateau of the integer variables.
To address this plateau problem in the integer variable treatment, \citet{hansen_cma-es_2011} proposed the injection of mutations into a sample of elements corresponding to integer variables in the CMA-ES, and \citet{Miyagi_GECCO2018} used this modification for the real-world MI-BBO problem.
Although this mutation injection is effective on certain problem classes, \citet{hansen_cma-es_2011} mentioned that it is not suitable for binary variables or $k$-ary integers in $k < 10$.

This study aims to improve the integer variable treatment of the CMA-ES in the MI-BBO problem. First, we investigate why the CMA-ES search fails in MI-BBO problems involving binary variables. Following the result, we propose the adaptation of the sample discretization process according to the current MGD parameters. The proposed adaptive discretization process can be represented as an affine transformation for a sample. Therefore, owing to the affine invariance of CMA-ES, it is expected to maintain the good behavior of the original CMA-ES. Additionally, we extend the proposed method from binary variables to integer variables.

The proposed method, termed CMA-ES with Margin, has the advantage of high generality because it is based on a simple and principled idea, lower-bounding the marginal probabilities associated with the generation of integer variables in the MGD.
To demonstrate the generality of the idea of the proposed method, in addition to the single-objective optimization case, we also develop an extension of the CMA-ES with Margin for multi-objective optimization.
Specifically, we introduce the idea of CMA-ES with Margin into the MO-CMA-ES (multi-objective covariance matrix adaptation)~\cite{igel2007covariance,voss2010improved}.
This part is the extension of the previous work~\cite{hamano2022cma}.

The rest of this paper is organized as follows.
In Section~\ref{sec:cma-es}, we describe the CMA-ES and the existing method of mixed-integer handling~\cite{hansen_cma-es_2011}.
In Section~\ref{sec:preliminary}, we conduct numerical experiments to clarify why the existing mixed-integer handling is difficult to optimize binary variables.
In Section~\ref{sec:proposed}, we propose a simple extension of the CMA-ES for the MI-BBO.
In Section~\ref{sec:exp_and_res}, we apply the proposed method to the MI-BBO problems to validate its robustness and efficiency.
Furthermore, to demonstrate the generality of the idea of the proposed method, in Section~\ref{sec:multiobjective}, we develop an extension of the CMA-ES with Margin for multi-objective optimization, and we empirically analyze its behavior on bi-objective benchmark problems in Section~\ref{sec:exp_multiobjective}.
Section~\ref{sec:conclusion} concludes with summary and future direction of this work.

\section{CMA-ES and Mixed Integer Handling}
\label{sec:cma-es}
\subsection{CMA-ES}
Let us consider the black-box minimization problem in the continuous search space for an objective function $f: \R^{N} \rightarrow \R$. The CMA-ES samples an $N$-dimensional candidate solution $\x \in \R^{N}$ from an MGD $\mathcal{N}(\boldsymbol{m}, \sigma^2 \boldsymbol{C})$ parameterized by the mean vector $\boldsymbol{m} \in \R^{N}$, covariance matrix $\boldsymbol{C} \in \R^{N \times N}$, and step-size $\sigma \in \R_{> 0}$. The CMA-ES updates the distribution parameters based on the objective function value $f(\x)$. There are several variations in the update methods of distribution parameters, although we consider the de facto standard CMA-ES \cite{hansen2016cma}, which combines the weighted recombination, cumulative step-size adaptation (CSA), rank-one covariance matrix update, and rank-$\mu$ update. We use the default parameters proposed in \cite{hansen2016cma} and list them in Table \ref{tb:cma_params}. The CMA-ES repeats the following steps until a termination criterion is satisfied.

\begin{table}[t]
    \centering
    \caption{Default hyperparameters and initial values of the CMA-ES.} \vspace{-2.5mm}
    \begin{tabular}{c|c}
        \hline
        $\lambda$ & 4 + $\lfloor 3 \ln(N) \rfloor$ \\
        $\mu$ & $\lfloor \frac{\lambda }{ 2 } \rfloor$ \\
        \hline
        $w'_{i}$ & $ \ln\left( \frac{\lambda + 1}{2} \right) - \ln i$ \\
        $w_{i} (i \leq \mu)$ & $\frac{w'_{i}}{ \sum_{j=1}^\mu w'_{i} }$ \\
        $w_{i} (i > \mu)$ & $\frac{ w'_{i} }{\sum_{j=\mu + 1}^\lambda | w'_{i} | } \min \left( 1 + \frac{c_1}{c_\mu}, 1 + \frac{2 \muw^{-}}{\muw + 2}, \frac{1 - c_1 - c_\mu}{N c_\mu} \right)$  \\
        \hline
        $\muw$ & $\frac{1}{ \sum_{j=1}^\mu (w'_{i})^2 }$ \\
        $\muw^{-}$ & $\frac{\left(\sum_{j=\mu + 1}^\lambda w'_{i} \right)^2 }{ \sum_{j=\mu + 1}^\lambda (w'_{i})^2 }$ \\
        \hline
        $c_m$ & 1 \\
        $c_{\sigma}$ & $\frac{\muw + 2}{N + \muw + 5}$ \\
        $c_{c}$ & $\frac{4 + \muw / N}{N + 4 + 2\muw / N}$ \\
        $c_{1}$ & $\frac{2}{(N + 1.3)^2 + \muw }$ \\
        $c_{\mu}$ & $\min \left( 1 - c_1 , \frac{2(\muw - 2 + 1 / \muw)}{(N + 2)^2 + \muw} \right)$ \\
        $d_{\sigma}$ & $1 + c_{\sigma}+ 2 \max\left(0, \sqrt{\frac{\muw - 1}{N + 1}} - 1 \right)$ \\
        \hline
        $\ps[0]$, $\pc[0]$ & $\boldsymbol{0}$ \\
        $\m[0], \C[0], \sig[0]$ & Depending on the problem \\
        \hline
    \end{tabular}
    \label{tb:cma_params}
\end{table}

\paragraph{Sample and Evaluate Candidate Solutions}
In the $t$-th iteration,  the $\lambda$ candidate solutions $\boldsymbol{x}_{i}$ ($i = 1, 2, \dots, \lambda$) are sampled independently from the MGD $\mathcal{N}(\m[t], (\sig[t])^{2} \C[t])$ as follows:
\begin{align}
    \y_i = (\C[t])^{\frac{1}{2}} \boldsymbol{\xi}_{i} \enspace, \\
    \x_i = \m[t] + \sig[t] \y_i \enspace,
\end{align}
where $\boldsymbol{\xi}_i \sim \mathcal{N}(\boldsymbol{0}, \boldsymbol{I})$ represents a random vector with zero mean and a covariance matrix of the identity matrix $\boldsymbol{I}$, and $(\C[t])^{\frac{1}{2}}$ is the square root of the covariance matrix $\C[t]$ that is the symmetric and positive definite matrix satisfying $\C[t] = (\C[t])^{\frac{1}{2}} (\C[t])^{\frac{1}{2}}$.
The candidate solutions $\{\x_1, \x_2 \dots, x_{\lambda}\}$ are evaluated by $f$ and sorted by ranking. Let $x_{i:\lambda}$ be the $i$-th best candidate solution; then, $f(\x_{1:\lambda}) \leq f(\x_{2:\lambda}) \leq \dots \leq f(\x_{\lambda:\lambda})$ and let $\y_{i:\lambda}$ be the random vector corresponding to $\x_{i:\lambda}$. 

\paragraph{Update Mean Vector} The mean vector update uses the weighted sum of the best $\mu < \lambda$ candidate solutions and updates $\m[t]$ as follows:
\begin{align}
    \m[t+1] = \m[t] + c_{m} \sum_{i=1}^{\mu} w_{i} ( \x_{i:\lambda} - \m[t] ) \enspace, \label{eq:cma-es-m}
\end{align}
where $c_{m}$ is the learning rate for the mean vector, and the weight $w_{i}$ satisfies $w_{1} \geq w_{2} \geq \dots \geq w_{\mu} > 0$ and $\sum_{i=1}^{\mu} w_{i} = 1$.

\paragraph{Compute Evolution Paths}
For the step-size adaptation and the rank-one update of the covariance matrix, we use evolution paths that accumulate an exponentially fading pathway of the mean vector in the generation sequence. Let  $\boldsymbol{p}_{\sigma}$ and $\boldsymbol{p}_{c}$ describe the evolution paths for the step-size adaptation and rank-one update, respectively; then, $\ps[t]$ and $\pc[t]$ are updated as follows:
\begin{align}
    \ps[t+1] &= (1-c_\sigma)\ps[t] + \sqrt{c_\sigma(2-c_\sigma)\muw} {\C[t]}^{-\frac12} \sum_{i=1}^\mu w_i \y_{i:\lambda}  \enspace, \\
    \pc[t+1] &= (1-c_c) \pc[t] + h_\sigma \sqrt{c_c(2-c_c)\muw} \sum_{i=1}^\mu w_i \y_{i:\lambda} \enspace,
\end{align}
where $c_\sigma$ and $c_c$ are cumulative rates, $\muw$ is effective sample-size, and
{\small
\begin{align*}
    h_\sigma = \mathds{1} \left\{ \|\ps[t+1]\| < \sqrt{1-(1-c_\sigma)^{2(t+1)}}\left(1.4+\frac{2}{N+1}\right)\E [\|\mathcal{N}(\boldsymbol{0}, \boldsymbol{I}) \|] \right\}
\end{align*}
}is an indicator function used to suppress a rapid increase in $\boldsymbol{p}_{c}$, where $\E [\|\mathcal{N}(\boldsymbol{0}, \boldsymbol{I}) \|] \approx \sqrt{N} \left(1 - \frac{1}{4N} + \frac{1}{21N^2}\right)$ is the expected Euclidean norm of the sample from a standard Gaussian distribution.

\paragraph{Update Step-size and Covariance Matrix}
Using the evolution paths computed in the previous step, we update $\C[t]$ and $\sig[t]$ as follows:
\begin{align}
        \C[t+1] &= \left(1 - c_1 - c_\mu \sum^{\lambda}_{i=1} w_{i} + (1-h_\sigma)c_1 c_c(2-c_c) \right) \C[t] + \underbrace{c_1 \pc[t+1]{\pc[t+1]}^\top}_{\text{rank-one update}} + \underbrace{c_\mu \sum_{i=1}^\lambda w_i^{\circ} \y_{i:\lambda}\y_{i:\lambda}^\top}_{\text{rank-}\mu\text{ update}}  \enspace, \\
        \sig[t+1] &= \sig[t] \exp \left( \frac{c_\sigma}{d_\sigma} \left( \frac{\|\ps[t+1] \|}{\E [\|\mathcal{N}(\boldsymbol{0}, \boldsymbol{I}) \|]} - 1 \right) \right)  \enspace,
        \label{eq:cma-es-sig}
\end{align}
where
$w_i^{\circ} := w_i \cdot \left(1 \text{ if } w_i\ge0 \text{ else } N / \left\| (\C[t])^{-\frac{1}{2}} \boldsymbol{y}_{i:\lambda} \right\|^{2} \right)$, 
$c_1$ and $c_{\mu}$ are the learning rates for the rank-one and rank-$\mu$ updates, respectively. Additionally, $d_{\sigma}$ is a damping parameter for the step-size adaptation. Note that the covariance matrix $\C[t+1]$ is kept symmetric since $\C[t]$, $\pc[t+1]{\pc[t+1]}^\top$, and $\y_{i:\lambda}\y_{i:\lambda}^\top$ are symmetric matrices.

\subsection{CMA-ES with Mixed-Integer Handling}
\label{sec::mi-cma-es}
In \cite{hansen_cma-es_2011}, several steps of the CMA-ES are modified to handle the integer variables. To explain this modification, we apply notations $\lbrack \cdot \rbrack_{j}$ and $\langle \cdot \rangle_{j}$, where the former denotes the $j$-th element of an argument vector and the latter denotes the $j$-th diagonal element of an argument matrix. We denote the number of dimensions as $N = N_{\mathrm{co}} + N_{\mathrm{in}}$, where $N_{\mathrm{co}}$ and $N_{\mathrm{in}}$ are the numbers of the continuous and integer variables, respectively. More specifically, the 1st to $N_{\mathrm{co}}$-th elements and $(N_{\mathrm{co}} + 1)$-th to $N$-th elements of the candidate solution are the elements corresponding to the continuous and integer variables, respectively.

\paragraph{Inject Integer Mutation}
For the element corresponding to the integer variable, stagnation occurs when the sample standard deviation becomes much smaller than the granularity of the discretization. The main idea to solve this stagnation in \cite{hansen_cma-es_2011} is to inject the \emph{integer mutation vector} $\boldsymbol{r}^{\mathrm{int}}_{i} \in \mathbb{N}^{N}$ into the candidate solution, which is given by
\begin{align}
    \x_i = \m[t] + \sig[t] \y_i + \boldsymbol{S}^{\mathrm{int}} \boldsymbol{r}^{\mathrm{int}}_{i}  \enspace, 
    \label{eq::integer_mutation}
\end{align}
where $\boldsymbol{S}^{\mathrm{int}}$ is the diagonal matrix whose diagonal elements indicate the variable granularities, which is $\langle \boldsymbol{S}^{\mathrm{int}} \rangle_{j} = 1$ if $N_{\mathrm{co}} + 1 \leq j \leq N$; otherwise $\langle \boldsymbol{S}^{\mathrm{int}} \rangle_{j} = 0$ in usual case. The integer mutation vector $\boldsymbol{r}^{\mathrm{int}}_{i}$ is sampled as follows:
\begin{enumerate}[\textrm{Step }1\textrm{. }]
    \item Set up a randomly ordered set of elements indices $J^{(t)}$ satisfying $2 \sig[t] \langle \C[t] \rangle_{j}^{\frac{1}{2}} < \langle \boldsymbol{S}^{\mathrm{int}} \rangle_{j}$.
    \item Determine the number of candidate solutions into which the integer mutation is injected as follows:
    \begin{align*}
    \lambda_{\mathrm{int}}^{(t)} = \left\{
        \begin{array}{lll}
            0 & (|J^{(t)}| = 0) \\
            \min(\lambda / 10 + |J^{(t)}| + 1, \lfloor \lambda / 2 \rfloor - 1) & (0 < |J^{(t)}| < N) \\
            \lfloor \lambda / 2 \rfloor & (|J^{(t)}| = N)
        \end{array}
        \right.  \enspace .
    \end{align*}
    \item $\lbrack \boldsymbol{R}'_{i} \rbrack_{j} = 1$ if the element indicate $j$ is equal to mod$(i - 1,|J^{(t)}|)$-th element of $J^{(t)}$, otherwise $\lbrack \boldsymbol{R}'_{i} \rbrack_{j} = 0$.
    \item $\lbrack \boldsymbol{R}''_{i} \rbrack_{j}$ is sampled from a geometric distribution with the probability parameter $p = 0.7^{\frac{1}{|J^{(t)}|}}$ if $j \in J^{(t)}$, otherwise $\lbrack \boldsymbol{R}''_{i} \rbrack_{j} = 0$.
    \item $\boldsymbol{r}^{\mathrm{int}}_{i} = \pm ( \boldsymbol{R}'_{i} + \boldsymbol{R}''_{i})$ with the sign-switching probability 1/2 if $i \leq \lambda_{\mathrm{int}}^{(t)}$, otherwise $\boldsymbol{r}^{\mathrm{int}}_{i} = \boldsymbol{0}$.
    \item If $\lambda_{\mathrm{int}}^{(t)} > 0$, $\lbrack  \boldsymbol{r}^{\mathrm{int}}_{\lambda} \rbrack_{j} = \pm \left( \left\lfloor \frac{\lbrack \boldsymbol{x}_{1:\lambda}^{(t-1)} \rbrack_{j}}{\langle \boldsymbol{S}^{\mathrm{int}} \rangle_{j}} \right\rfloor - \left\lfloor \frac{\lbrack \m[t] \rbrack_{j}}{\langle \boldsymbol{S}^{\mathrm{int}} \rangle_{j}} \right\rfloor \right)$ with the sign-switching probability 1/2 if $\langle \boldsymbol{S}^{\mathrm{int}} \rangle_{j} > 0$, otherwise $\lbrack \boldsymbol{r}^{\mathrm{int}}_{\lambda} \rbrack_{j} = 0$. This is a modified version of \cite{hansen_cma-es_2011} and introduced in~\cite{Miyagi_GECCO2018}.
\end{enumerate}

\paragraph{Modify Step-size Adaptation}
If the standard deviation of the elements corresponding to the integer variables is much smaller than the granularity of the discretization, then the step-size adaptation rapidly decreases the step-size. To address this problem, \cite{hansen_cma-es_2011} proposed a modification of the step-size adaptation to remove the elements corresponding to integer variables with considerably smaller standard deviations from the evolution path $\ps[t+1]$ when updating the step-size as follows: 
\begin{align}
    \sig[t+1] &= \sig[t] \exp \left( \frac{c_\sigma}{d_\sigma} \left( \frac{\| \boldsymbol{I}_{\sigma}^{(t+1)} \ps[t+1] \|}{\E [\|\mathcal{N}(\boldsymbol{0}, \boldsymbol{I}_{\sigma}^{(t+1)} ) \|]} - 1 \right) \right)  \enspace,
\end{align}
where $\boldsymbol{I}_{\sigma}^{(t+1)}$ is the diagonal masking matrix, and $\langle \boldsymbol{I}_{\sigma}^{(t+1)} \rangle_{j} = 0$ if $5 \sigma \langle \C[t] \rangle_{j}^{\frac{1}{2}} / \sqrt{c_{\sigma}} < \langle \boldsymbol{S}^{\mathrm{int}} \rangle_{j}$; otherwise, $\langle \boldsymbol{I}_{\sigma}^{(t+1)} \rangle_{j} = 1$. The expected value $\|\mathcal{N}(\boldsymbol{0}, \boldsymbol{I}_{\sigma}^{(t+1)} ) \|$ is approximated by $\sqrt{M} \left(1 - \frac{1}{4M} + \frac{1}{21M^2}\right)$, where $M$ is the number of non-zero diagonal elements for $\boldsymbol{I}_{\sigma}^{(t+1)}$.

\subsection{Preliminary Experiment: Why Is It Difficult to Optimize Binary Variables with CMA-ES Introducing Integer Mutations?}
\label{sec:preliminary}

It is known that the integer variable handling of CMA-ES \cite{hansen_cma-es_2011} does not work well for binary variables.
However, the reasons for this have not been well explored. We then empirically check why this integer handling fails to optimize binary variables.

We consider the function $\textsc{Encoding}_f(\x_i)$ to binarize the elements of the candidate solution corresponding to the binary variables, and the dimension $N = N_{\mathrm{co}} + N_{\mathrm{bi}}$, where $N_{\mathrm{bi}}$ is the number of binary variables. We define $\textsc{Encoding}_f(\x_i): \mathbb{R}^{N_{\mathrm{co}}} \times \mathbb{R}^{N_{\mathrm{bi}}} \mapsto \mathbb{R}^{N_{\mathrm{co}}} \times \{0, 1 \}^{N_{\mathrm{bi}}}$ as
\begin{align}
    \label{eq::binary_encoding}
    \textsc{Encoding}_f(\x_i) =
    \left\{
        \begin{array}{lll}
            \lbrack \x_i \rbrack_{j} &  (1 \leq j \leq N_{\mathrm{co}}) \\
            \mathds{1} \{ \lbrack \x_i \rbrack_{j} > 0 \} & (N_{\mathrm{co}} + 1 \leq j \leq N)
        \end{array}
    \right.  \enspace .
\end{align}
The partially discretized candidate solution obtained by \eqref{eq::binary_encoding} is denoted by $\bar{\x}_i = \textsc{Encoding}_f(\x_i)$.

Compared to the integer variables, binary variables have a much wider interval, where the same binary variables can be taken after binarization. Therefore, if the variance decreases while the mean vector is so far from the threshold zero at which the binary variable changes, the optimization of the binary variable fails.

\paragraph{Settings}
We use the \textsc{SphereOneMax} function as the objective function, which is a combination of the \textsc{Sphere} function and the \textsc{OneMax} function for the continuous and binary variables, respectively. The \textsc{SphereOneMax} function is defined as
\begin{align}
    \textsc{SphereOneMax}(\bar{\x}_i) = \sum_{j=1}^{N_{\mathrm{co}}} \lbrack \bar{\x}_i\rbrack_j^2 + N_{\mathrm{bi}} - \sum_{k=N_{\mathrm{co}}+1}^N \lbrack \bar{\x}_i \rbrack_k  \enspace, 
\end{align}
where the optimal solution 
is 0 for continuous variables and 1 for binary variables, respectively, and $\textsc{SphereOneMax}(\bar{\x}^{*}) = 0$. We check the behavior using the CMA-ES with integer variable handling introduced in Section \ref{sec::mi-cma-es} . Additionally, we use this CMA-ES variant with a box constraint $\lbrack \x_i \rbrack_{j} \in \lbrack -1 , 1 \rbrack$ corresponding to the binary variables. When the box constraint is used, the penalty $\| \x_{i}^{\mathrm{feas}} - \x_{i} \|^{2}_{2} / N$ is added to the evaluation value, where $\x_{i}^{\mathrm{feas}}$ is the nearest-neighbor feasible solution to $\x_{i}$. The number of dimensions $N$ is set to 40, and $N_{\mathrm{co}} = N_{\mathrm{bi}} = N/2 = 20$.

The initial mean vector $\m[0]$ is set to uniform random values in the range $\lbrack 1, 3 \rbrack$ for continuous variables and 0 for the binary variables, respectively. The covariance matrix and step-size are initialized with $\C[0] = \boldsymbol{I}$ and $\sig[0] = 1$, respectively. The optimization is successful when the best-evaluated value is less than $10^{-10}$, and the optimization is stopped when the minimum eigenvalue of $\sigma^{2} \boldsymbol{C}$ is less than $10^{-30}$. 

\begin{figure*}[t]
    \centering
    \includegraphics[width=\linewidth]{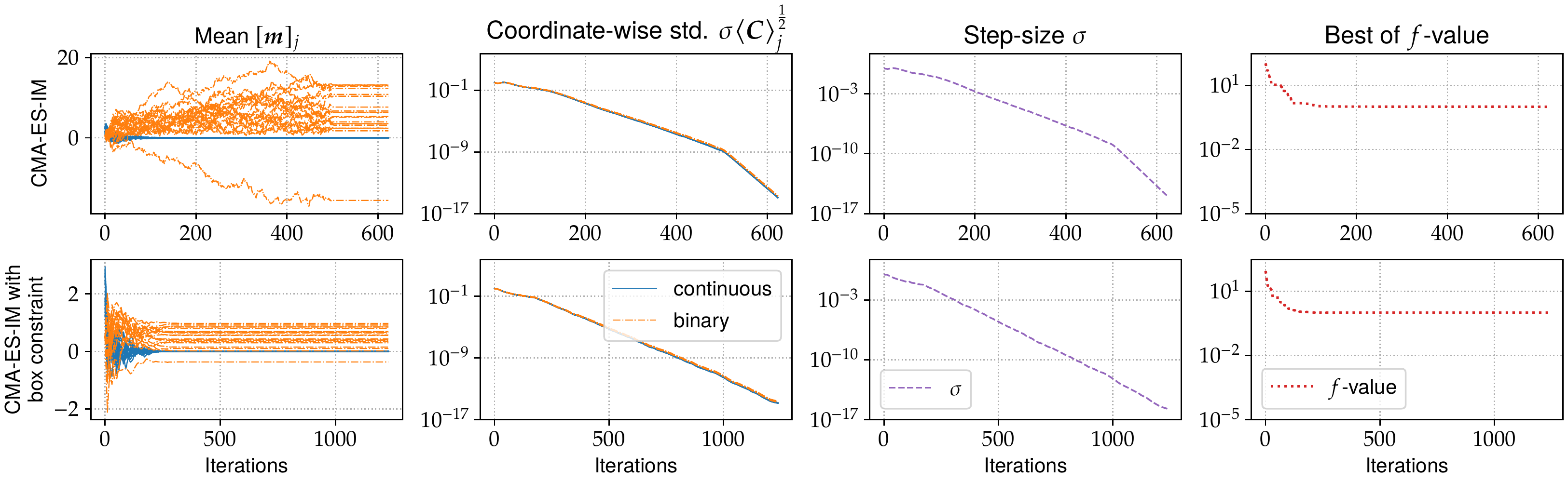}
    \caption{Transition of each element of the mean vector and the diagonal elements of the covariance matrix on CMA-ES-IM \cite{hansen_cma-es_2011} with and without the box constraint for a typical single failed trial on 40-dimensional \textsc{SphereOneMax}.}
    \label{fig::transition_m_and_C}
\end{figure*}

\paragraph{Result and Discussion}
For the coordinate-wise mean $\lbrack \boldsymbol{m} \rbrack_{j}$, the coordinate-wise standard deviation $\sigma \langle \boldsymbol{C} \rangle_{j}^{\frac{1}{2}}$, step-size $\sigma$, and best-evaluated value, the upper and lower sides of the Figure \ref{fig::transition_m_and_C} show the transitions of a single typical run of the optimization failure for the CMA-ES with the integer mutation and modification of the step-size adaptation (denoted by CMA-ES-IM) and the CMA-ES-IM with the box constraint. 
The CMA-ES-IM decreases the coordinate-wise standard deviations for binary variables with the step-size. In contrast, coordinate-wise mean is far from the threshold value of zero.
In this case, the integer mutation provided in Step~1 to Step~5 in Section~\ref{sec::mi-cma-es} is not effective to improve the evaluation value because in the dimension corresponding to the binary variable, $\boldsymbol{S}^\mathrm{int} \boldsymbol{r}^\mathrm{int}_i $~($i = 1, \ldots, \lambda_\mathrm{int}^{(t)}$) are smaller than the distances between $\m[t] + \sig[t]\boldsymbol{y}_i$ and the threshold value of zero. Moreover, when $\boldsymbol{S}^\mathrm{int} \boldsymbol{r}^\mathrm{int}_\lambda$ calculated in Step~6 also becomes small, the mutation no longer affects the candidate solutions at all~(after 500 iterations in Figure~\ref{fig::transition_m_and_C}).

On the other hand, CMA-ES-IM with box constraint can prevent the coordinate-wise mean from being far from zero and avoid fixation of candidate solutions by the mutation.
However, even if the mutation works, a high penalty value of box constraint results in a poor evaluation value. In this case, the mutated samples cannot be reflected in the mean update, which uses only the superior $\mu$ samples.
Then, the stagnation problem in the negative domain still remains.

These results suggest that we need a new way to handle integer variables that takes binary variables into account instead of the integer mutation. In Section \ref{sec:proposed}, we propose an integer handling method that preserves the generation probability of a different integer variable by introducing a correction that brings the coordinate-wise mean closer to a threshold value as the coordinate-wise standard deviation decreases.
\section{Proposed Method}
\label{sec:proposed}
In this section, we propose a simple modification of the CMA-ES in the MI-BBO. The basic idea is to introduce a lower bound on the marginal probability referred to as the \textit{margin}, so that the sample is not fixed to a single integer variable. The margin is a common technique in 
the estimation of distribution algorithms~(EDAs) for binary domains to address the problem of bits being fixed to 0 or 1. In fact, the population-based incremental learning~(PBIL)~\cite{PBIL:1994}, a binary variable optimization method based on Bernoulli distribution, restricts the updated marginals to the range $[1/N, 1-1/N]$. This prevents the optimization from stagnating with the distribution converging to an undesirable direction before finding the optimum.

To introduce this margin correction to the CMA-ES, we define a diagonal matrix $\boldsymbol{A}$ whose initial value is given by the identity matrix and redefine the MGD that generates the samples as $\mathcal{N}(\boldsymbol{m}, \sigma^2 \boldsymbol{A} \boldsymbol{C} \boldsymbol{A}^\top)$. The margin correction is achieved by correcting $\boldsymbol{A}$ and $\boldsymbol{m}$ so that the probability of the integer variables being generated outside the dominant values is maintained above a certain value $\alpha$. Because the sample generated from $\mathcal{N}(\boldsymbol{m}, \sigma^2 \boldsymbol{A} \boldsymbol{C} \boldsymbol{A}^\top)$ is equivalent to applying the affine transformation of $\boldsymbol{A}$ to the sample generated from $\mathcal{N}(\boldsymbol{m}, \sigma^2 \boldsymbol{C})$, we can separate the adaptation of the covariance and the update of $\boldsymbol{A}$. Consequently, the proposed modification can be represented as the affine transformation of the samples used to evaluate the objective function, without making any changes to the updates in CMA-ES.
It should be noted that although the mean vector can also be corrected by the affine transformation, we directly correct it to avoid the divergence of $\boldsymbol{m}$.

In this section, we first redefine $\textsc{Encoding}_f$ to facilitate the introduction of the margin. 
Next, we show the process of the CMA-ES with the proposed modification.
Finally, we explain the margin correction, namely, the updates of $\boldsymbol{A}$ and $\boldsymbol{m}$, separately for the cases of binary and integer variables. The code of the proposed CMA-ES with Margin is available at \url{https://github.com/EvoConJP/CMA-ES_with_Margin}.

\subsection{Definition of $\textsc{Encoding}_f$ and Threshold $\ell$}
\label{ssec:encoding_function}
Let $z_{j,k}$ be the $k$-th smallest value among the discrete values in the $j$-th dimension, where $N_{\mathrm{co}} + 1 \leq j \leq N$ and $1 \leq k \leq K_j$.
It should be noted that $K_j$ is the number of candidate integers for the $j$-th variable $\boldsymbol{z}_j$.
Under this definition, the binary variable can also be represented as, e.g. $z_{j,1}=0$, $z_{j,2}=1$. Moreover, we introduce a threshold $\thd$ for encoding continuous variables into discrete variables. Let $\ell_{j, k|k+1}$ be the threshold of two discrete variables $z_{j,k}$ and $z_{j,k+1}$; it is given by the midpoint of $z_{j,k}$ and $z_{j,k+1}$, namely, $\ell_{j, k|k+1} := (z_{j,k} + z_{j,k+1})/2$. We then redefine $\textsc{Encoding}_f$ when $N_{\mathrm{co}} + 1 \leq j \leq N$ as follows:
\begin{align*}
    \textsc{Encoding}_f([\x_i]_j) =
    \left\{
        \begin{array}{lll}
            z_{j,1} & \text{if} \enspace [\x_i]_j \leq \ell_{j, 1|2} \\
            z_{j,k} & \text{if} \enspace \ell_{j, k-1|k} < [\x_i]_j \leq \ell_{j, k|k+1} \\
            z_{j,K_j} & \text{if} \enspace \ell_{j, K_j-1|K_j} < [\x_i]_j
        \end{array}
    \right.
\end{align*}
Moreover, if $1 \leq j \leq N_{\mathrm{co}}$, $[\x_i]_j$ is isometrically mapped as $\textsc{Encoding}_f([\x_i]_j) = [\x_i]_j$.
Then, the discretized candidate solution is denoted by $\bar{\x}_i = \textsc{Encoding}_f(\x_i)$. The set of discrete variables $\boldsymbol{z}_j$ is not limited to consecutive integers such as $\{ 0,1,2 \}$, but can also handle general discrete variables such as $\{ 1,2,4 \}$ and $\{ 0.01,0.1,1 \}$.

\subsection{CMA-ES with the Proposed Modification}
Given $\A[0]$ as an identity matrix $\boldsymbol{I}$, the update of the proposed method, termed \textit{CMA-ES with margin}, at the iteration $t$ is given in the following steps.
\begin{enumerate}[\textrm{Step }1\textrm{. }]
    \item The $\lambda$ candidate solutions $\x_i$~($i = 1,2,\ldots, \lambda$) are sampled from $\mathcal{N} (\m[t], (\sig[t])^2 \C[t])$ as $\x_i = \m[t] + \sig[t] \y_i$, where $\y_i \sim \mathcal{N} (\boldsymbol{0}, \C[t])$ for $i = 1,2,\ldots, \lambda$.
    \item The affine transformed solutions $\boldsymbol{v}_i$~($i = 1,2,\ldots, \lambda$) are calculated as $\boldsymbol{v}_i = \m[t] + \sig[t] \A[t] \y_i$ for $i = 1,2,\ldots, \lambda$.
    \item The discretized $\boldsymbol{v}_i$, i.e., $\bar{\boldsymbol{v}}_i$~($i = 1,2,\ldots, \lambda$) are evaluated by $f$ and sort $\{\x_{1:\lambda}, \x_{2:\lambda}, \dots ,\x_{\lambda:\lambda}\}$ and $\{\y_{1:\lambda}, \y_{2:\lambda}, \dots ,\y_{\lambda:\lambda}\}$ so that the indices correspond to $f(\bar{\boldsymbol{v}}_{1:\lambda}) \leq f(\bar{\boldsymbol{v}}_{2:\lambda}) \leq \cdots \leq f(\bar{\boldsymbol{v}}_{\lambda:\lambda})$.
    \item Based on \eqref{eq:cma-es-m} to \eqref{eq:cma-es-sig}, update $\m[t]$, $\C[t]$, and $\sig[t]$ using $\x$ and $\y$.
    \item Modify $\m[t+1]$ and update $\A[t]$ based on Section~\ref{ssec:margin_bin} and Section~\ref{ssec:margin_int}. \label{step:modify}
\end{enumerate}
It should be noted that the algorithm based on the above is consistent with the original CMA-ES if no corrections are made in Step~\ref{step:modify}. In other words, the smaller the margin parameter $\alpha$, described in Section~\ref{ssec:margin_bin} and Section~\ref{ssec:margin_int}, and the more insignificant the modification, the closer the above algorithm is to the original CMA-ES. Moreover, the update of the variance-covariance has not been modified, which facilitates the smooth consideration of the introduction of the CMA-ES properties, e.g., step-size adaptation methods other than CSA.

\begin{figure}[t]
    \begin{minipage}[b]{0.45\linewidth}
    \begin{center}
      \includegraphics[width=\linewidth]{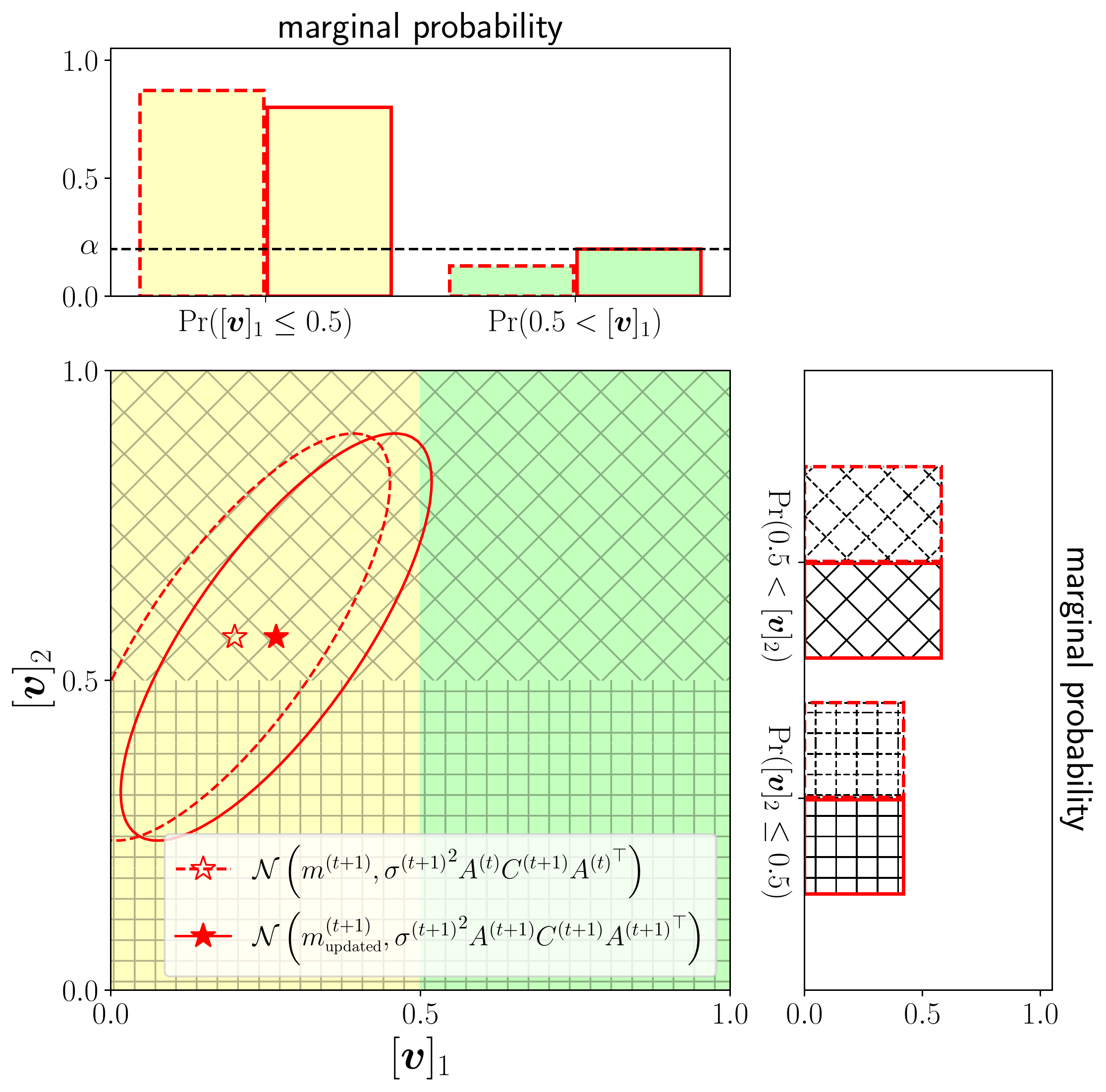}
    \end{center}
    \end{minipage}
    \hspace{3.0mm}
    \begin{minipage}[b]{0.45\linewidth}
    \begin{center}
      \includegraphics[width=\linewidth]{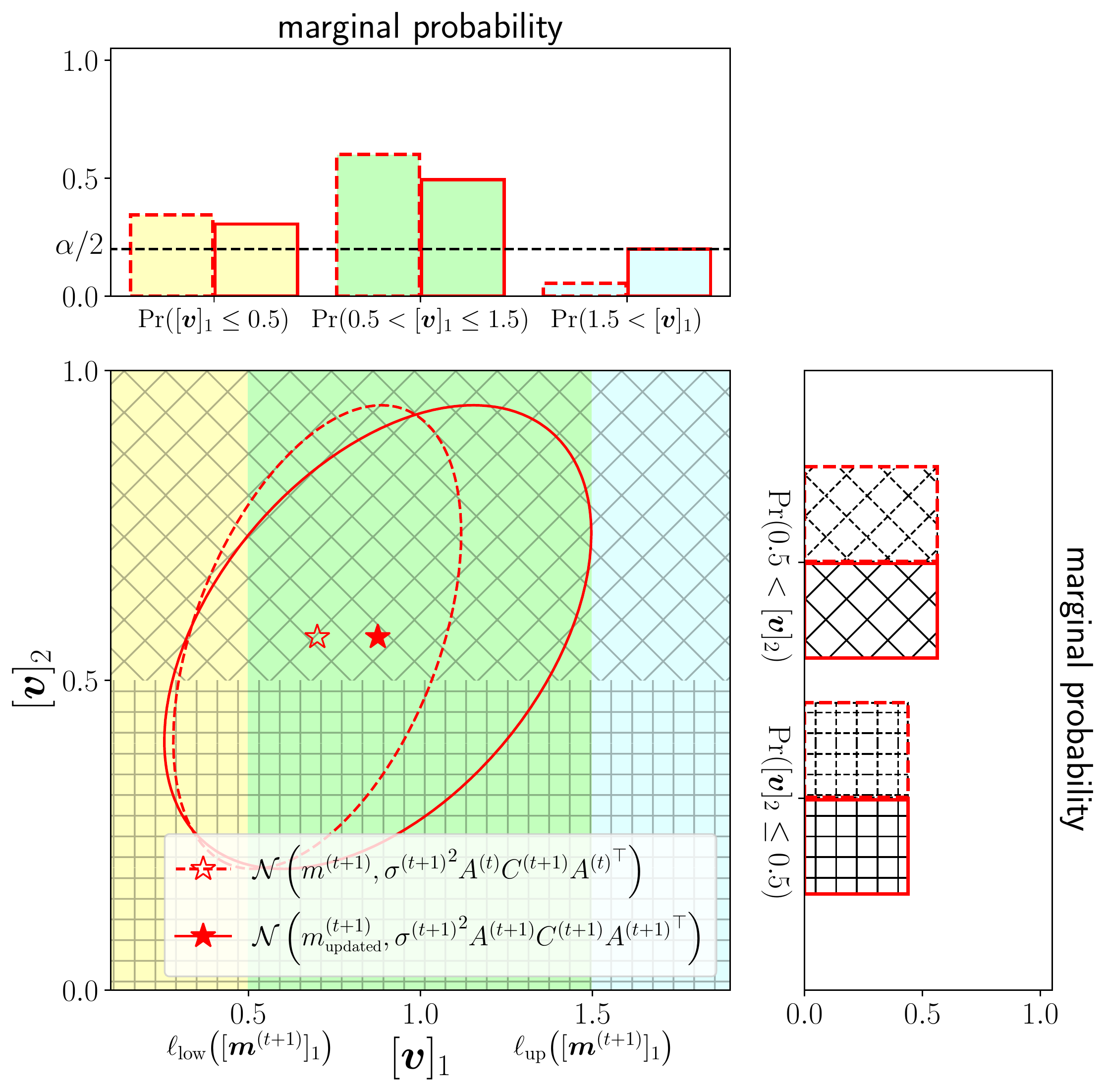}
    \end{center}
    \end{minipage}
    \vspace{-3.0mm}
    \caption{Example of MGD followed by $\boldsymbol{v}$ and its marginal probability. The dashed red ellipse corresponds to the MGD before the correction, whereas the solid one corresponds to the MGD after the correction for the binary variable (left) and for the integer variable (right).}
    \label{fig:correction}
\end{figure}
\subsection{Margin for Binary Variables} \label{ssec:margin_bin}
Considering the probability that a binarized variable $[\bar{\boldsymbol{v}}]_j$ is $0$ and the probability that it is $1$, the following conditions should be satisfied after the modification by the margin:
\begin{align}
    \min \left\{ \Pr([\bar{\boldsymbol{v}}]_j = 0), \Pr([\bar{\boldsymbol{v}}]_j = 1) \right\} \geq \alpha 
    \Leftrightarrow \min \left\{ \Pr([\boldsymbol{v}]_j < 0.5), \Pr([\boldsymbol{v}]_j \geq 0.5) \right\} \geq \alpha
\end{align}
It should be noted that the binarize threshold $\ell_{j,1|2}$ is equal to $0.5$. Here, the left of Figure~\ref{fig:correction} shows an example of the updated MGD followed by $\boldsymbol{v}$ and the marginal probabilities. The MGD before the margin correction~(dashed red ellipse) shows that its marginal probability $\Pr([\boldsymbol{v}]_1 \geq 0.5)$ is smaller than the margin parameter $\alpha$. In this case, by modifying the element of the mean vector $[\m[t+1]]_1$, we can correct the marginal probability $\Pr([\boldsymbol{v}]_1 \geq 0.5)$ to $\alpha$ without affecting the other dimensions. To calculate the amount of the correction for this mean vector, we consider the confidence interval of the probability $1-2\alpha$ in the marginal distribution of the $j$-th dimension. The confidence interval of the $j$-th dimension is represented by
\begin{align*}
    \left[ [\m[t+1]]_j - \textrm{CI}_j^{(t+1)} (1-2\alpha), [\m[t+1]]_j + \textrm{CI}_j^{(t+1)} (1-2\alpha) \right] \enspace.
\end{align*}
It should be noted that $\textrm{CI}_j^{(t+1)} (1-2\alpha)$ is defined as
\begin{align*}
    \textrm{CI}_j^{(t+1)} (1-2\alpha) := \sqrt{\chi^2_{\textrm{ppf}}(1-2\alpha) {\sig[t+1]}^2 \left\langle \A[t] \C[t+1] {\A[t]}^\top\right\rangle_j } \enspace,
\end{align*}
where $\chi^2_\textrm{ppf}(\cdot)$ is the quantile function that, given the lower cumulative probability, returns the percentage point in the chi-squared distribution with $1$ degree of freedom. If the threshold $\ell_{j,1|2} = 0.5$ is outside this confidence interval, the marginal probability to be corrected is less than $\alpha$. Given the encoding threshold closest to $[\m[t+1]]_j$ as $\ell\bigl([\m[t+1]]_j\bigr)$, which is equal to $0.5$ in the binary case, the modification for the $j$-th element of the mean vector can be denoted as
\begin{align}
    [\m[t+1]]_j \leftarrow \ell\left([\m[t+1]]_j\right) + &\sign\left( [\m[t+1]]_j - \ell\left([\m[t+1]]_j\right) \right) \notag \\
    &\quad \cdot \min \left\{ \left| [\m[t+1]]_j - \ell\left([\m[t+1]]_j\right) \right|, \textrm{CI}_j^{(t+1)} (1-2\alpha) \right\} \enspace. \label{eq:bin_correct_m}
\end{align}
Additionally, no changes are made to $\langle \A[t] \rangle_j$, namely,
\begin{align}
   \langle \A[t+1] \rangle_j \leftarrow \langle \A[t] \rangle_j \enspace. \label{eq:bin_correct_A}
\end{align}
As shown in the solid red line in the left of Figure~\ref{fig:correction}, the marginal probability after this modification is lower-bounded by $\alpha$.

\subsection{Margin for Integer Variables}
\label{ssec:margin_int}
First, we consider the cases where the $j$-th element of the mean vector satisfies $[\m[t+1]]_j \leq \ell_{j, 1|2}$ or $\ell_{j, K_j-1|K_j} < [\m[t+1]]_j$. In these cases, the integer variable $[\boldsymbol{v}]_j$ may be fixed to $z_{j,1}$ or $z_{j,K_j}$, respectively. Thus, we correct the marginal probability of generating one inner integer variable, i.e., $z_{j,2}$ for $z_{j,1}$ or $z_{j,K_j-1}$ for $z_{j,K_j}$, to maintain it above $\alpha$, respectively. This correction is achieved by updating $[\m[t+1]]_j$ and $\A[t]$ based on \eqref{eq:bin_correct_m} and \eqref{eq:bin_correct_A}.

Next, we consider the case of other integer variables. The right of Figure~\ref{fig:correction} shows an example of the updated MGD followed by $\boldsymbol{v}$ and the marginal probability. In this example, $[\boldsymbol{v}]_1$ is expected to be fixed in the interval $(0.5,1.5]$ when $\Pr([\boldsymbol{v}]_1 \leq 0.5)$ and $\Pr(1.5 < [\boldsymbol{v}]_1)$ become small. Thus, the correction strategy is to lower-bound the probability of $[\boldsymbol{v}]_j$ being generated outside the plateau where $[\boldsymbol{v}]_j$ is expected to be fixed, such as $\Pr([\boldsymbol{v}]_1 \leq 0.5)$ and $\Pr(1.5 < [\boldsymbol{v}]_1)$ in the right of Figure~\ref{fig:correction}. In this case, the value of the margin is set to $\alpha/2$.
For simplicity, we denote $\ell_\textrm{low}\bigl([\m[t+1]]_j\bigr)$ and $\ell_\textrm{up}\bigl([\m[t+1]]_j\bigr)$ as
\begin{align}
    \ell_\textrm{low}\left([\m[t+1]]_j\right) &:= \max \left\{ l \in \thd_j : l < [\m[t+1]]_j \right\} \enspace, \\
    \ell_\textrm{up}\left([\m[t+1]]_j\right) &:= \min \left\{ l \in \thd_j : [\m[t+1]]_j \leq l \right\} \enspace.
\end{align}
The first step of the modification is to calculate $p_\textrm{low}$, $p_\textrm{up}$, and $p_\textrm{mid}$ as follows.
\begin{align}
    p_\textrm{low} &\leftarrow \Pr\left([\boldsymbol{v}]_j \leq \ell_\textrm{low}\left([\m[t+1]]_j\right)\right) \\
    p_\textrm{up} &\leftarrow \Pr\left(\ell_\textrm{up}\left([\m[t+1]]_j \right) < [\boldsymbol{v}]_j \right) \\
    p_\textrm{mid} &\leftarrow 1 - p_\textrm{low} - p_\textrm{up}
\end{align}
Next, we restrict $p_\textrm{low}$, $p_\textrm{up}$, and $p_\textrm{mid}$ as follows.
\begin{align}
    p'_\textrm{low} &\leftarrow \max \{ \alpha/2 , p_\textrm{low} \} \label{eq:lower_low} \\
    p'_\textrm{up} &\leftarrow \max \{ \alpha/2 , p_\textrm{up} \} \label{eq:lower_up} \\
    p''_\textrm{low} &\leftarrow p'_\textrm{low} + \frac{1 - p'_\textrm{low} - p'_\textrm{up} - p_\textrm{mid}}{p'_\textrm{low} + p'_\textrm{up} + p_\textrm{mid} - 3 \cdot \alpha/2}(p'_\textrm{low} - \alpha/2) \label{eq:ensure_low} \\
    p''_\textrm{up} &\leftarrow p'_\textrm{up} + \frac{1 - p'_\textrm{low} - p'_\textrm{up} - p_\textrm{mid}}{p'_\textrm{low} + p'_\textrm{up} + p_\textrm{mid} - 3 \cdot \alpha/2}(p'_\textrm{up} - \alpha/2) \label{eq:ensure_up}
\end{align}
The equations \eqref{eq:ensure_low} and \eqref{eq:ensure_up} ensure $p''_\textrm{low} + p''_\textrm{up} + p'_\textrm{mid} = 1$, while keeping $p''_\textrm{low} \geq \alpha/2$ and $p''_\textrm{up} \geq \alpha/2$, where $p'_\textrm{mid}=1-p''_\textrm{low} - p''_\textrm{up}$. This handling method is also adopted in~\cite[Appendix~D]{ASNG:2019}.
We update $\m[t+1]$ and $\A[t]$ so that the corrected marginal probabilities $\Pr\bigl([\boldsymbol{v}]_j \leq \ell_\textrm{low}\bigl([\m[t+1]]_j\bigr)\bigr)$ and $\Pr\bigl(\ell_\textrm{up}\bigl([\m[t+1]]_j \bigr) < [\boldsymbol{v}]_j \bigr)$ are $p''_\textrm{low}$ and $p''_\textrm{up}$, respectively. The conditions to be satisfied are as follows.
\begin{align}
    \begin{cases}
        [\m[t+1]]_j - \ell_\textrm{low}\left( [\m[t+1]]_j \right) = \textrm{CI}_j^{(t+1)} (1 - 2p''_\textrm{low}) \\
        \ell_\textrm{up}\left( [\m[t+1]]_j \right) - [\m[t+1]]_j = \textrm{CI}_j^{(t+1)} (1 - 2p''_\textrm{up})
    \end{cases} \label{eq:simult}
\end{align}
\begin{algorithm}[b]
    \caption{Single update in CMA-ES with Margin for optimization problem $\min_{\x} f(\x)$}
    \label{alg:proposed}
    \begin{algorithmic}[1]
        \STATE \textbf{given } $\m[t] \in \R^N$, $\sig[t] \in \R_{+}$, $\C[t] \in \R^{N \times N}$, $\ps[t] \in \R^N$, $\pc[t] \in \R^N$, and $\A[t] \in \R^{N \times N}$~(diagonal matrix)
        \FOR {$i = 1, \ldots, \lambda$}
            \STATE $\y_i \sim \mathcal{N}(\boldsymbol{0}, \C[t])$
            \STATE $\x_i \leftarrow \m[t] + \sig[t] \y_i$
            \STATE $\boldsymbol{v}_i \leftarrow \m[t] + \sig[t] \A[t] \y_i^\top$
            \STATE $\bar{\boldsymbol{v}}_i \leftarrow \textsc{Encoding}_f(\boldsymbol{v}_i)$
        \ENDFOR
        \STATE Sort $\{\x_{1:\lambda}, \x_{2:\lambda}, \ldots, \x_{\lambda:\lambda}\}$ and $\{\y_{1:\lambda}, \y_{2:\lambda}, \ldots, \y_{\lambda:\lambda}\}$ so that the indices correspond to $f(\bar{\boldsymbol{v}}_{1:\lambda}) \leq f(\bar{\boldsymbol{v}}_{2:\lambda}) \leq \ldots \leq f(\bar{\boldsymbol{v}}_{\lambda:\lambda})$
        \STATE $\m[t+1] \leftarrow \m[t] + c_{m} \sum_{i=1}^{\mu} w_{i} ( \x_{i:\lambda} - \m[t] )$
        \STATE $\ps[t+1] \leftarrow (1-c_\sigma)\ps[t] + \sqrt{c_\sigma(2-c_\sigma)\muw} {\C[t]}^{-\frac12} \sum_{i=1}^\mu \w_i \y_{i:\lambda}$
        \STATE $h_\sigma \leftarrow \mathds{1} {\{\|\ps[t+1]\| < \sqrt{1-(1-c_\sigma)^{2(t+1)}}\left(1.4+\frac{2}{N+1}\right)\E [\|\mathcal{N}(\boldsymbol{0}, \boldsymbol{I}) \|]\}}$
        \STATE $\pc[t+1] \leftarrow (1-c_c) \pc[t] + h_\sigma \sqrt{c_c(2-c_c)\muw} \sum_{i=1}^\mu \w_i \y_{i:\lambda}$
        \STATE $\C[t+1] \leftarrow \left(1 - c_1 - c_\mu \sum_{i=1}^\lambda w_i + (1-h_\sigma)c_1 c_c(2-c_c) \right) \C[t] + \underbrace{c_1 \pc[t+1]{\pc[t+1]}^\top}_{\text{rank-one update}} + \underbrace{c_\mu \sum_{i=1}^\lambda w_i^{\circ} \y_{i:\lambda}\y_{i:\lambda}^\top}_{\text{rank-}\mu\text{ update}}$
        \STATE $\sig[t+1] \leftarrow \sig[t] \exp \left( \frac{c_\sigma}{d_\sigma} \left( \frac{\|\ps[t+1] \|}{\E [\|\mathcal{N}(\boldsymbol{0}, \boldsymbol{I}) \|]} - 1 \right) \right)$
        \STATE $\m[t+1], \boldsymbol{A}^{(t+1)} \leftarrow \textrm{MarginCorrection}\left(\m[t+1], \boldsymbol{A}^{(t)}, \sig[t+1], \C[t+1]\right)$ \label{state:margin_off}
    \end{algorithmic}
\end{algorithm}
\begin{algorithm}[t]
    \caption{Margin Correction}
    \label{alg:margin}
    \begin{algorithmic}[1]
        \STATE \textbf{given } $\boldsymbol{m} \in \R^N$, $\boldsymbol{A} \in \R^{N \times N}$~(diagonal matrix), $\sigma \in \R_{+}$, and $\boldsymbol{C} \in \R^{N \times N}$
        \STATE \texttt{// Margin for Continuous Variables (identity mapping)}
        \FOR {$j = 1, \ldots, N_{\mathrm{co}}$}
            \STATE $[\boldsymbol{m}']_j \leftarrow [\boldsymbol{m}]_j$
            \STATE $\langle \boldsymbol{A}' \rangle_j \leftarrow \langle \boldsymbol{A} \rangle_j$
        \ENDFOR
        \STATE \texttt{// Margin for Binary Variables}
        \FOR {$j = N_{\mathrm{co}}+1, \ldots, N_{\mathrm{co}}+N_{\mathrm{bi}}$}
            \STATE $[\boldsymbol{m}]_j \leftarrow \ell\left([\boldsymbol{m}]_j\right) + \sign\left( [\boldsymbol{m}']_j - \ell\left([\boldsymbol{m}]_j\right) \right) \min \left\{ \left| [\boldsymbol{m}]_j - \ell\left([\boldsymbol{m}]_j\right) \right|, \textrm{CI}_j^{(t+1)} (1-2\alpha) \right\}$
            \STATE $\langle \boldsymbol{A}' \rangle_j \leftarrow \langle \boldsymbol{A} \rangle_j$
        \ENDFOR
        \STATE \texttt{// Margin for Integer Variables}
        \FOR {$j = N_{\mathrm{co}}+N_{\mathrm{bi}}+1, \ldots, N$}
            \IF {$[\boldsymbol{m}]_j \leq \ell_{j,1|2}$ or $\ell_{j,K_j-1|K_j} < [\boldsymbol{m}]_j$}
                \STATE $[\boldsymbol{m}']_j \leftarrow \ell\left([\boldsymbol{m}]_j\right) + \sign\left( [\boldsymbol{m}]_j - \ell\left([\boldsymbol{m}]_j\right) \right) \min \left\{ \left| [\boldsymbol{m}]_j - \ell\left([\boldsymbol{m}]_j\right) \right|, \textrm{CI}_j^{(t+1)} (1-2\alpha) \right\}$
                \STATE $\langle \boldsymbol{A}' \rangle_j \leftarrow \langle \boldsymbol{A} \rangle_j$
            \ELSE
                \STATE $[\boldsymbol{m}']_j \leftarrow \frac{\ell_\textrm{low}\left([\boldsymbol{m}]_j\right) \sqrt{\chi^2_{\textrm{ppf}}(1-2p''_\textrm{up})} + \ell_\textrm{up}\left([\boldsymbol{m}]_j\right) \sqrt{\chi^2_{\textrm{ppf}}(1-2p''_\textrm{low})}}{\sqrt{\chi^2_{\textrm{ppf}}(1-2p''_\textrm{low})} + \sqrt{\chi^2_{\textrm{ppf}}(1-2p''_\textrm{up})}}$
                \STATE $\langle \boldsymbol{A}' \rangle_j \leftarrow \frac{\ell_\textrm{up}\left([\boldsymbol{m}]_j\right) - \ell_\textrm{low}\left([\boldsymbol{m}]_j\right)}{\sigma^{(t+1)} \sqrt{\langle \C[t+1] \rangle_j} \left( \sqrt{\chi^2_{\textrm{ppf}}(1-2p''_\textrm{low})} + \sqrt{\chi^2_{\textrm{ppf}}(1-2p''_\textrm{up})} \right) }$
            \ENDIF
        \ENDFOR
        \RETURN $\boldsymbol{m}'$ and $\boldsymbol{A}'$
    \end{algorithmic}
\end{algorithm}

Finally, the solutions of the simultaneous linear equations for $[\m[t+1]]_j$ and $\langle \A[t] \rangle_j$ are applied to the updated $[\m[t+1]]_j$ and $\langle \A[t+1] \rangle_j$ as follows:
\begin{align}
    [\m[t+1]]_j &\leftarrow \frac{\ell_\textrm{low}\left([\m[t+1]]_j\right) \sqrt{\chi^2_{\textrm{ppf}}(1-2p''_\textrm{up})} + \ell_\textrm{up}\left([\m[t+1]]_j\right) \sqrt{\chi^2_{\textrm{ppf}}(1-2p''_\textrm{low})}}{\sqrt{\chi^2_{\textrm{ppf}}(1-2p''_\textrm{low})} + \sqrt{\chi^2_{\textrm{ppf}}(1-2p''_\textrm{up})}} \\
    \langle \A[t+1] \rangle_j &\leftarrow \frac{\ell_\textrm{up}\left([\m[t+1]]_j\right) - \ell_\textrm{low}\left([\m[t+1]]_j\right)}{ \sigma^{(t+1)} \sqrt{\langle \C[t+1] \rangle_j} \left( \sqrt{\chi^2_{\textrm{ppf}}(1-2p''_\textrm{low})} + \sqrt{\chi^2_{\textrm{ppf}}(1-2p''_\textrm{up})} \right) }
\end{align}
Correcting $\m[t+1]$ and $\A[t]$ in this way bounds both $\Pr\bigl([\boldsymbol{v}]_j \leq  \ell_\textrm{low} \bigr. \bigl. \bigl([\m[t+1]]_j\bigr)\bigr)$ and $\Pr\bigl(\ell_\textrm{up}\bigl([\m[t+1]]_j \bigr) < [\boldsymbol{v}]_j \bigr)$ above $\alpha/2$, as indicated by the solid line in the right of Figure~\ref{fig:correction}. Moreover, we note that there are cases where $p_\textrm{mid}$, $\Pr(0.5 < [\m[t+1]]_1 \leq 1.5)$ in the right of Figure~\ref{fig:correction}, is less than $\alpha/2$ even with the margin. In that case, the variance is sufficiently large that no fixation of the discrete variable occurs in the corresponding dimension. Finally, we show the algorithm details in Algorithm~\ref{alg:proposed}, and the margin correction therein is described in Algorithm~\ref{alg:margin}.

\section{Experiment and Result}
\label{sec:exp_and_res}
We apply the proposed method to the MI-BBO optimization problem for several benchmark functions to validate its robustness and efficiency. In Section \ref{sec:alpha_search}, we check the performance changes of the proposed method according to the hyperparameter $\alpha$. In Section \ref{sec:benchmark}, we check the difference in the search success rate and the number of evaluations between the proposed method and CMA-ES-IM for several artificial MI-BBO benchmark functions. The definitions of the benchmark functions used in this section are listed as below:
\begin{itemize}
    \item  $\textsc{SphereOneMax}(\bar{x}) = \sum_{j=1}^{N_{\mathrm{co}}} \lbrack \bar{\x} \rbrack_j^2 + N_{\mathrm{bi}} - \sum_{k=N_{\mathrm{co}}+1}^N \lbrack \bar{\x} \rbrack_k$
    \item $\textsc{SphereLeadingOnes}(\bar{x}) = \sum_{j=1}^{N_{\mathrm{co}}}\lbrack \bar{\x} \rbrack_j^2 + N_{\mathrm{bi}} - \sum_{k=N_{\mathrm{co}}+1}^N \prod_{l = N_{\mathrm{co}}+1}^k \lbrack \bar{\x} \rbrack_l$
    \item $\textsc{EllipsoidOneMax}(\bar{x}) = \sum_{j=1}^{N_{\mathrm{co}}}\left(1000^{\frac{j-1}{N_{\mathrm{co}}-1}}\lbrack \bar{\x} \rbrack_j \right)^2 + N_{\mathrm{bi}} - \sum_{k=N_{\mathrm{co}}+1}^N \lbrack \bar{\x} \rbrack_k$
    \item $\textsc{EllipsoidLeadingOnes}(\bar{x}) = \sum_{j=1}^{N_{\mathrm{co}}}\left(1000^{\frac{j-1}{N_{\mathrm{co}}-1}}\lbrack \bar{\x} \rbrack_j \right)^2 + N_{\mathrm{bi}} - \sum_{k=N_{\mathrm{co}}+1}^N \prod_{l = N_{\mathrm{co}}+1}^k \lbrack \bar{\x} \rbrack_l$
    \item $\textsc{SphereInt}(\bar{x}) = \sum_{j=1}^{N}\lbrack \bar{\x} \rbrack_j^2$
    \item $\textsc{EllipsoidInt}(\bar{x}) = \sum_{j=1}^{N}\left(1000^{\frac{j-1}{N-1}}\lbrack \bar{\x} \rbrack_j \right)^2$ 
\end{itemize}
In all functions, the first $N_{\mathrm{co}}$ variables are continuous, whereas the last $N - N_{\mathrm{co}}$ variables are binary or integer;
i.e., \textsc{SphereOneMax}, \textsc{SphereLeadingOnes}, \textsc{EllipsoidOneMax}, and \textsc{EllipsoidLeadingOnes} functions include continuous and binary variables.
On the other hand, \textsc{SphereInt} and \textsc{EllipsoidInt} functions include continuous and integer variables.
In all the experiments, we adopted the default parameters of the CMA-ES listed in Table \ref{tb:cma_params}.

\subsection{Hyperparameter Sensitivity for $\alpha$}
\label{sec:alpha_search}
We use the \textsc{SphereInt} function of the objective function and adopt the same initialization for the distribution parameters and termination condition as in Section \ref{sec:preliminary}. The number of dimensions $N$ is set to 20, 40, or 60, and the numbers of the continuous and integer variables are $N_{\mathrm{co}} = N_{\mathrm{int}} = N/2$. The integer variables are assumed to take values in the range $\lbrack -10, 10 \rbrack$. 
We argue that it is reasonable that the hyperparameter $\alpha$, which determines the margin in the proposed method, should depend on the number of dimensions $N$ and the sample size $\lambda$.
\footnote{
Based on our preliminary experiments, we decided to use $N$ dependence instead of $N_{\mathrm{bi}} + N_{\mathrm{int}}$. This dependence becomes particularly important in scenarios where the objective function involves a high proportion of continuous variables and/or is ill-conditioned. In such cases, continuous variables tend to be optimized later than discrete variables. Our preliminary experiments have revealed that when the margin ($\alpha$) is large, the convergence of continuous variables becomes significantly slower, resulting in inefficient optimization. While our $\alpha$ setting has been effective in our experiments, our understanding of the appropriate $\alpha$ setting remains limited, and we defer the detailed investigation of its effects to future work.
}
In this experiment, we evaluate a total of 48 settings except for $\alpha = 1$ which we set as $\alpha = N^{-m} \lambda^{-n} (m, n \in \lbrack 0, 0.5, 1, 1.5, 2, 2.5, 3 \rbrack)$. In each setting, 100 trials are performed independently using different seed values. 

\begin{figure*}[t]
    \centering
    \includegraphics[width=0.95\linewidth]{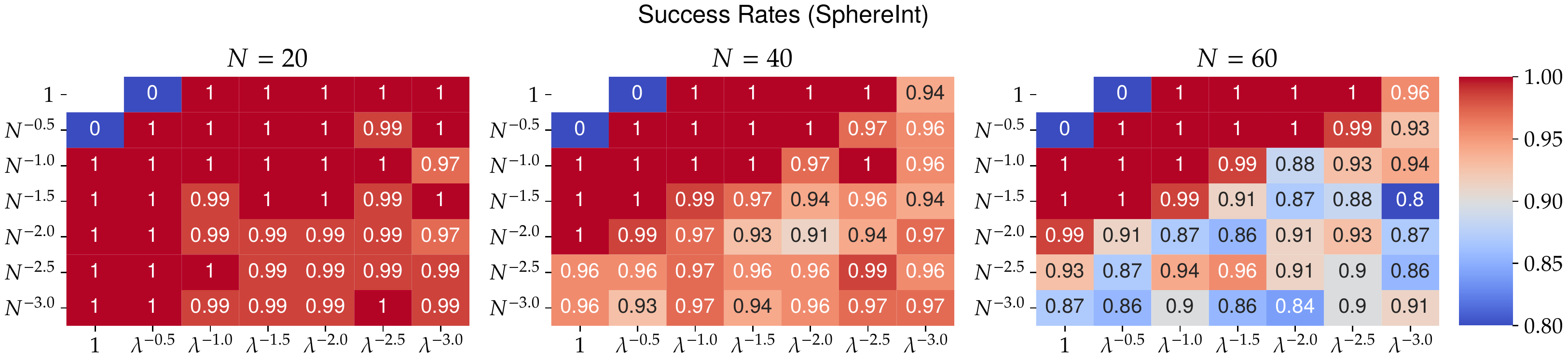}
    \vspace{1.0mm}
    \includegraphics[width=0.95\linewidth]{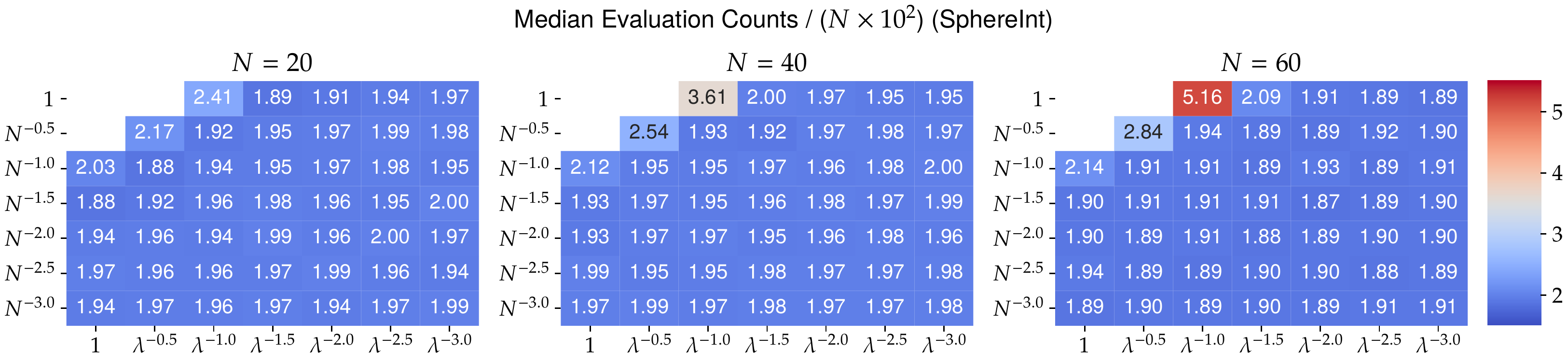}
    \vspace{-2.5mm}
    \caption{Heatmap of the success rate (top) and the median evaluation counts for successful cases (bottom) in the $N$-dimensional \textsc{SphereInt} function when the hyperparameter $\alpha=N^{-m}\lambda^{-n}$ of the proposed method is changed.}
    \label{fig:grid_seaech_success}
\end{figure*}

\begin{table*}[tbph]
  \caption{Comparison of evaluation counts and success rate for the benchmark functions. The evaluation counts are reported as the median value of successful trials. The bold fonts represent the best median evaluation counts and the best success rates among the methods. The inside of the parentheses represents the interquartile range (IQR).}
  \vspace{-2.5mm}
  \label{table:result_benchmark_funcion}
  \centering
  \begin{tabular}{c|c|cc|cc|cc}
    \toprule
    \multirow{3}{*}{Function} & \multirow{3}{*}{$N$} & \multicolumn{2}{c}{CMA-ES-IM} & \multicolumn{2}{|c|}{CMA-ES-IM \& Box-constraint} & \multicolumn{2}{c}{CMA-ES w. Margin (Proposed)} \\
    & & Evaluation & Success & Evaluation & Success & Evaluation  & Success \\
    & & Counts & Rates & Counts & Rates & Counts & Rates \\
    \midrule
    \multirow{3}{*}{\textsc{SphereOneMax}}
        & 20 & \textbf{2964} (174) & 83/100 & 4752 (100) & \textbf{100/100} & 3876 (435) & \textbf{100/100} \\
        & 40 & \textbf{5745} (558) & 62/100 & 7575 (543) & 64/100 & 7995 (514) & \textbf{100/100} \\
        & 60 & \textbf{8112} (340) & 46/100 & 12240 (336) & 21/100 & 12408 (1012) & \textbf{100/100} \\
    \midrule
    \multirow{3}{*}{\textsc{SphereLeadingOnes}} 
        & 20 & \textbf{2904} (192) & 77/100 & 5124 (296) & \textbf{100/100} & 4158 (339) & \textbf{100/100} \\
        & 40 & \textbf{5647} (296) & 24/100 & 8280 (112) & 10/100 & 8505 (724) & \textbf{100/100} \\
        & 60 & \textbf{8816} (12) & 8/100 & 12624 (96) & 2/100 & 13424 (1008) & \textbf{100/100} \\
    \midrule
    \multirow{3}{*}{\textsc{EllipsoidOneMax}}
        & 20 & \textbf{9918} (396) & 20/100 & 26700 (7020) & 96/100 & 11172 (666) & \textbf{100/100} \\
        & 40 & \textbf{35325} (0) & 1/100 & 79912 (28661) & 14/100 & 40590 (1789) & \textbf{100/100} \\
        & 60 & \textbf{78560} (0) & 5/100 & 283440 (0) & 1/100 & 88064 (3536) & \textbf{100/100} \\
    \midrule
    \multirow{3}{*}{\textsc{EllipsoidLeadingOnes}} 
        & 20 & \textbf{10104} (441) & 14/100 & 24880 (7305) & 92/100 & 11454 (876) & \textbf{100/100} \\
        & 40 & - & 0/100 & 85807 (13335) & 4/100 & \textbf{41048} (1744) & \textbf{100/100} \\
        & 60 & - & 0/100 & - & 0/100 & \textbf{91496} (3488) & \textbf{100/100} \\
    \midrule
    \multirow{3}{*}{\textsc{SphereInt}}
        & 20 & 5130 (477) & 86/100 & 5280 (744) & 89/100 & \textbf{3840} (306) & \textbf{100/100} \\
        & 40 & 7950 (697) & 71/100 & 8070 (555) & 85/100 & \textbf{7838} (458) & \textbf{100/100} \\
        & 60 & 13184 (816) & 41/100 & 12992 (544) & 34/100 & \textbf{11512} (544) & \textbf{100/100} \\
    \midrule
    \multirow{3}{*}{\textsc{EllipsoidInt}} 
        & 20 & 19128 (3192) & 76/100 & 19476 (5028)  & 73/100 & \textbf{8418} (837) & \textbf{100/100} \\
        & 40 & 43935 (4095) & 73/100 & 43440 (4762) & 83/100 & \textbf{22815} (1733) & \textbf{100/100} \\
        & 60 & 89200 (6152) & 54/100 & 86848 (6136) & 54/100 & \textbf{42000} (3320) & \textbf{100/100} \\
    \bottomrule
  \end{tabular}
\end{table*}

\paragraph{Results and Discussion}
Figure~\ref{fig:grid_seaech_success} shows the success rate and the median evaluation count for successful cases in each setting. For the success rate, when $\alpha$ is set to a large value as $N^{-0.5}$ or $\lambda^{-0.5}$, all trials fail. Additionally, when $\alpha$ is set as smaller than $(N\lambda)^{-1}$, the success rate tends to decrease as the number of dimensions $N$ increases. If $\alpha$ is too large, the probability of the integer changing is also too large and the optimization is unstable;
however, if $\alpha$ is too small, it is difficult to get out of the stagnation because the conditions under which the mean $[\boldsymbol{m}]_{j}$ and affine matrix $\boldsymbol{A}$ corrections are applied become more stringent. 
For the median evaluation count, there is no significant difference for any dimension except for
$\alpha \in \{ N^{-1}, (N \lambda)^{-0.5}, \lambda^{-1} \}$. 
Therefore, for robustness and efficiency reasons, we use $\alpha = (N\lambda)^{-1}$ as a default parameter in the subsequent experiments in this study.

\subsection{Comparison of Optimization Performance}
\label{sec:benchmark}
We compare the optimization performance on the benchmark functions listed in Section \ref{sec:exp_and_res} for the proposed method, CMA-ES-IM, and CMA-ES-IM with box constraints to evaluate the effectiveness of the proposed integer handling.
The detailed comparison of the proposed method with other mixed-integer black-box optimization methods will be addressed in future work.
As in Section \ref{sec:alpha_search}, the number of dimensions $N$ is set to 20, 40, and 60. The number of continuous and integer variables are $N_{\mathrm{co}} = N_{\mathrm{bi}} = N_{\mathrm{int}} = N/2$, respectively. For CMA-ES-IM with the box constraint, \textsc{SphereOneMax}, \textsc{SphereLeadingOnes}, \textsc{EllipsoidOneMax}, and \textsc{ElipsoidLeadingOnes} functions are given the constraint $\lbrack \x \rbrack_{j} \in \lbrack -1, 1 \rbrack$ corresponding to the binary variables, and other functions are given the constraint $\lbrack \x \rbrack_{j} \in \lbrack -10, 10 \rbrack$ corresponding to the integer variables. The calculating method of the penalty for violating the constraints is the same as in Section \ref{sec:preliminary}. The optimization is successful when the best-evaluated value is less than $10^{-10}$, whereas the optimization is stopped when the minimum eigenvalue of $\sigma^{2} \boldsymbol{C}$ is less than $10^{-30}$ or the condition number of $\boldsymbol{C}$ exceeds $10^{14}$.

\paragraph{Results and Discussion}
Table \ref{table:result_benchmark_funcion} summarizes the median evaluation counts and success rates in each setting. Comparing the proposed method with the CMA-ES-IM with and without the box constraint for the \textsc{SphereOneMax} and \textsc{SphereLeadingOnes} functions, the CMA-ES-IM reaches the optimal solution in fewer evaluation counts. However, the success rate of CMA-ES-IM with and without the box constraint decreases as the number of dimensions increases. However, the success rate of the proposed method remains 100\% regardless of the increase in the number of dimensions. 
For the \textsc{EllipsoidOneMax} and \textsc{EllipsoidLeadingOnes} functions, the CMA-ES-IM without the box constraint fails on most of the trials in all dimensions, and the CMA-ES-IM with box constraint has a relatively high success rate in $N = 20$ but deteriorates rapidly in $N = 40$ or more dimensions.
In contrast, the proposed method maintains a 100\% success rate and reaches the optimal solution in fewer evaluation counts than the CMA-ES-IM with the box constraint in $N = 20$ or $N = 40$. For the \textsc{SphereInt} and \textsc{EllipsoidInt} functions, the proposed method successfully optimizes with fewer evaluations in all dimensions than the other methods, maintaining a 100\% success rate. These results show that the proposed method can perform the MI-BBO robustly and efficiently for multiple functions with an increasing number of dimensions.

\section{Extension to Multi-Objective Optimization}
\label{sec:multiobjective}
In this section, we present the application of the margin correction to multi-objective mixed-integer black-box optimization~(MO-MI-BBO). First, we briefly review the MO-MI-BBO and the MO-CMA-ES, followed by a discussion of the introduction of the margin into the MO-CMA-ES.

\subsection{Multi-Objective Mixed-Integer Optimization}
A multi-objective optimization problem is an optimization problem that targets multiple objective functions. In this paper, we will consider multi-objective optimization involving continuous and integer variables and refer to it as multi-objective mixed-integer optimization~(MO-MI-BBO).
The $M$-objective MO-MI-BBO is formally defined as follows:
\begin{align*}
    \underset{\x}{\text{minimize}} \enspace &\left( f_1(\x), \ldots, f_M(\x) \right) \\
    \text{where} \enspace &f_m:\R^{N_{\mathrm{co}}} \times \{ 0,1 \}^{N_{\mathrm{bi}}} \times \Z^{N_{\mathrm{in}}} \rightarrow \R \\
    &N = N_{\mathrm{co}} + N_{\mathrm{bi}} + N_{\mathrm{in}} \\
    &\hspace{-1.5mm}\begin{array}{ll}
        x_i \in \{0,1\} & \text{if} \enspace N_{\mathrm{co}}+1 \leq i \leq N_{\mathrm{co}}+N_{\mathrm{bi}} \\
        x_i \in [x_i^{\min}, x_i^{\max}] \cap \Z & \text{if} \enspace N_{\mathrm{co}}+N_{\mathrm{bi}}+1 \leq i \leq N
    \end{array}
\end{align*}
The goal of multi-objective optimization, including MO-MI-BBO, is to find a diverse set of Pareto-optimal solutions. The set of Pareto-optimal solutions is often evaluated by the hypervolume measure or $\mathcal{S}$-metric, which is defined as the Lebesgue measure of the union of hypercuboids in the objective space~\cite{HV:1998}. Multi-objective evolutionary algorithms often use this hypervolume measure and non-dominated sorting, as described below during the search.

\subsection{MO-CMA-ES}
The multi-objective covariance matrix adaptation strategy~(MO-CMA-ES)~\cite{igel2007covariance} is an extension of the CMA-ES for multi-objective black-box optimization. It combines the strategy parameter adaptation of (1+1)-CMA-ES with multi-objective selection based on non-dominated sorting~\cite{nondominated} and the contributing hypervolume~\cite{contributingHV}.
Moreover, the success rule is applied to each pair of parent and offspring, and the mutation is regarded as successful if the offspring has a higher rank than the parent.
In contrast, \citet{voss2010improved} proposed a new update scheme for the MO-CMA-ES, which considers a mutation successful if the offspring are selected for the next parent population. This significantly improved performance in the MO-CMA-ES by avoiding premature convergence of the step size. In this paper, we focus on this improved MO-CMA-ES as a baseline. For simplicity, we consider the case $\mu = \lambda$, where the number of parent and offspring individuals is equal. When applying the improved MO-CMA-ES to MO-MI-BBO, we repeat the following steps until a termination criterion is satisfied.

\paragraph{Notation}
The improved MO-CMA-ES denotes the $i$-th individual $\pa[t]_i$ in generation $t$ by the tuple $\left[ \x^{(t)}_i, {\bar{\boldsymbol{v}}}_i^{(t)},\right. \linebreak \left. \bar{p}_{\suc, i}^{(t)}, {\sigma}_i^{(t)}, \boldsymbol{p}_{c, i}^{(t)}, \boldsymbol{C}_{i}^{(t)} \right]$, where $\x^{(t)}_i$ is the search point, ${\bar{\boldsymbol{v}}}_i^{(t)}$ is the discretized search point, $\bar{p}_{\suc, i}^{(t)}$ is the success probability, ${\sigma}_i^{(t)}$ is the global step size, $\boldsymbol{p}_{c, i}^{(t)}$ is the evolution path, $\boldsymbol{C}_{i}^{(t)}$ is the covariance matrix of the search distribution.

\paragraph{Sample Offspring individuals}
In the $t$-th iteration, each of the $\lambda$ offspring individuals $\of[t+1]_i$~($i = 1, \ldots, \lambda$) is copied from its parent $\pa[t]_i$. Then mutate ${\x'}^{(t+1)}_i$ according to ${\sigma}_i^{(t)}$ and $\C[t]_i$ as follows.
\begin{align}
    {\x'}^{(t+1)}_i \sim {\x}^{(t)}_i + {\sigma}_i^{(t)} \mathcal{N}(\boldsymbol{0}, \C[t]_i)
\end{align}
Moreover, the discretized search point ${\bar{\boldsymbol{v}'}}_i^{(t+1)}$ is calculated using $\textsc{Encoding}_f$ introduced in Section~\ref{ssec:encoding_function}.
\begin{align}
    {\bar{\boldsymbol{v}'}}_i^{(t+1)} = \textsc{Encoding}_f({\x'}^{(t+1)}_i)
\end{align}

\paragraph{Ranking the individuals}
Rank the individuals in the set of parent and offspring individuals $Q^{(t)}$ based on the non-dominated sorting and the contributing hypervolume. For details, see~\cite[Equation~4]{voss2010improved}. However, note that in the case of MO-MI-BBO, the argument of the objective function is not $x_i^{(t)}$ but ${\bar{\boldsymbol{v}}}_i^{(t)}$, i.e., $f_m(\pa[t]_i)$ implies $f_m({\bar{\boldsymbol{v}}}_i^{(t)})$, not $f_m(\boldsymbol{x}_i^{(t)})$. The $i$-th best individual ranked by non-dominated sorting and the contributing hypervolume in $Q^{(t)}$ is defined as $Q^{(t)}_{\prec : i}$. Then, the next parent population is denoted by $Q^{(t+1)} = \left\{ Q^{(t)}_{\prec : i} \middle| 1 \leq i \leq \lambda \right\}$ since the top $\lambda$ individuals are selected from $Q^{(t)}$.

\paragraph{Success Indicator}
In the MO-CMA-ES, the step size and covariance matrix are updated based on the success rule. The success of the mutation from parent individual $\pa[t]_i$ to offspring individual $\of[t+1]_i$ is determined by whether the offspring individual $\of[t+1]_i$ is selected for the next parent population $Q^{(t+1)}$, i.e., the success indicator $\suc_{Q^{(t)}} \left( \pa[t]_i, \of[t+1]_i \right)$ is defined by
\begin{align}
    \suc_{Q^{(t)}} \left( \pa[t]_i, \of[t+1]_i \right) =
    \begin{cases}
        1 \enspace &\text{if} \enspace \of[t+1]_i \in Q^{(t+1)}\\
        0 \enspace &\text{otherwise}
    \end{cases}
    \enspace.
\end{align}

\paragraph{Compute Success Probabilities and Step-Size Adaptation}
The smoothed success probabilities are updated as
\begin{align}
    \psuof{i}{t+1} \gets (1-c_p)\psuof{i}{t+1} + c_p \suc_{Q^{(t)}} \left( \pa[t]_i, \of[t+1]_i \right) \enspace,
\end{align}
where $c_p$ is the success rate averaging parameter. According to each success probability, step-size adaptation is performed as
\begin{align}
    {\sigma'}_i^{(t+1)} \gets {\sigma'}_i^{(t+1)} \exp \left( \frac{1}{d} \frac{\psuof{i}{t+1} - p_{\suc}^{\mathrm{target}}}{1 - p_{\suc}^{\mathrm{target}}} \right) \enspace,
\end{align}
where $p_{\suc}^{\mathrm{target}}$ is a hyperparameter that corresponds to the target success probability.

\paragraph{Update Covariance Matrix and Evolution Path}
Based on the success probability $\psuof{i}{t+1}$, the evolution path $\pcof{i}{t+1}$ and covariance matrix $\Cof{i}{t+1}$ are updated. If $\psuof{i}{t+1} < p_{\mathrm{thresh}}$,
\begin{align}
    \pcof{i}{t+1} &\gets (1-c_c) \pcof{i}{t+1} + \sqrt{c_c(2-c_c)} \frac{{\x'}^{(t+1)}_i - \x^{(t)}_i}{{\sigma}_i^{(t)}} \enspace, \\
    \Cof{i}{t+1} &\gets (1-c_{\mathrm{cov}})\Cof{i}{t+1} + c_{\mathrm{cov}} \pcof{i}{t+1} {\pcof{i}{t+1}}^\top \enspace.
\end{align}
If $\psuof{i}{t+1} \geq p_{\mathrm{thresh}}$,
\begin{align}
    \pcof{i}{t+1} &\gets (1-c_c) \pcof{i}{t+1} \enspace, \\
    \Cof{i}{t+1} &\gets (1-c_{\mathrm{cov}})\Cof{i}{t+1} + c_{\mathrm{cov}} \left( \pcof{i}{t+1} {\pcof{i}{t+1}}^\top + c_c (2-c_c) \Cof{i}{t+1} \right) \enspace.
\end{align}

\paragraph{Update Parent Individuals}
The success probability $\bar{p}_{\suc, i}^{(t)}$ and step size ${\sigma}_i^{(t)}$ of the parent individuals $\pa[t]_i$ are adapted as follows.
\begin{align}
    \bar{p}_{\suc, i}^{(t)} &\gets (1-c_p) \bar{p}_{\suc, i}^{(t)} + c_p \suc_{Q^{(t)}} \left( \pa[t]_i, \of[t+1]_i \right) \enspace, \\
    {\sigma}_i^{(t)} &\gets {\sigma}_i^{(t)} \exp \left( \frac{1}{d} \frac{\bar{p}_{\suc, i}^{(t)} - p_{\suc}^{\mathrm{target}}}{1 - p_{\suc}^{\mathrm{target}}} \right) \enspace.
\end{align}

\paragraph{Select New Parent Population}
Finally, the new parent population is selected from the set of parent and offspring individuals. The top $\lambda$ individuals are the parents of the next generation.
\begin{align}
    Q^{(t+1)} = \left\{ Q^{(t)}_{\prec : i} \middle| 1 \leq i \leq \lambda \right\}
\end{align}

\subsection{Introduction of Margin to MO-CMA-ES}
To address the fixation that can occur when the MO-CMA-ES is applied to MO-MI-BBO, we introduce the margin to MO-CMA-ES. In introducing the margin, there are two differences between the MO-CMA-ES and the single-objective CMA-ES introduced in Section~\ref{sec:cma-es}. The first is that in the MO-CMA-ES, the step size and covariance are updated for each individual. Therefore, the matrix $\boldsymbol{A}$ for margin correction should also be prepared for each individual, i.e., the tuple for the individual $\pa[t]_i$ is redefined as $\left[ \x^{(t)}_i, {\bar{\boldsymbol{v}}}_i^{(t)},  \bar{p}_{\suc, i}^{(t)}, {\sigma}_i^{(t)}, \boldsymbol{p}_{c, i}^{(t)}, \boldsymbol{C}_{i}^{(t)}, \boldsymbol{A}_{i}^{(t)} \right]$.
The second is that the MO-CMA-ES uses an elitist evolution strategy, and the margin correction targets the search point $\x^{(t)}_i$. To discuss the validity of the margin correction in the elitist evolution strategy, we show that the discretized search points are invariant before and after the margin correction in the following proposition.

\begin{prop}~\label{prop:after_margin_correction}
Consider the margin correction for an individual $\pa[t]_i$. Let $\tilde{\boldsymbol{x}}_i^{(t)}$ be the margin-corrected search point. Then, it holds
\begin{align}
    \textsc{Encoding}_f (\boldsymbol{x}_i^{(t)}) = \textsc{Encoding}_f (\tilde{\boldsymbol{x}}_i^{(t)})
\end{align}
\end{prop}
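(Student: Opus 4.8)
The plan is to reduce the statement to a coordinate-wise claim and then dispatch the three branches of the margin correction in Algorithm~\ref{alg:margin}. Since the correction modifies only the $j$-th coordinate $[\boldsymbol{x}_i^{(t)}]_j$ together with the single diagonal entry $\langle \boldsymbol{A}_i^{(t)} \rangle_j$, and since $\textsc{Encoding}_f$ discretizes each coordinate independently, it suffices to prove $\textsc{Encoding}_f([\boldsymbol{x}_i^{(t)}]_j) = \textsc{Encoding}_f([\tilde{\boldsymbol{x}}_i^{(t)}]_j)$ for each $j$. Moreover, the evaluation point of an individual is its center, which is obtained from $\boldsymbol{x}_i^{(t)}$ at zero displacement, so the update of $\langle \boldsymbol{A}_i^{(t)} \rangle_j$ never enters the encoding of the center point and only the modification of $[\boldsymbol{x}_i^{(t)}]_j$ is relevant.

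For a continuous coordinate ($1 \leq j \leq N_{\mathrm{co}}$) the correction is the identity, so the claim is immediate. For a binary coordinate, and for an integer coordinate in the boundary case $[\boldsymbol{x}_i^{(t)}]_j \leq \ell_{j,1|2}$ or $\ell_{j,K_j-1|K_j} < [\boldsymbol{x}_i^{(t)}]_j$, the update takes the clipped form of \eqref{eq:bin_correct_m},
\begin{align*}
    [\tilde{\boldsymbol{x}}_i^{(t)}]_j = \ell + \sign\!\left([\boldsymbol{x}_i^{(t)}]_j - \ell\right)\min\!\left\{\left|[\boldsymbol{x}_i^{(t)}]_j - \ell\right|,\ \textrm{CI}_j^{(t)}(1-2\alpha)\right\},
\end{align*}
where $\ell$ is the encoding threshold nearest to $[\boldsymbol{x}_i^{(t)}]_j$. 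Here I would observe that the displacement from $\ell$ retains the sign of $[\boldsymbol{x}_i^{(t)}]_j - \ell$ and has magnitude at most $|[\boldsymbol{x}_i^{(t)}]_j - \ell|$; hence $[\tilde{\boldsymbol{x}}_i^{(t)}]_j$ satisfies exactly the same inequality against $\ell$ (either $\le \ell$ or $> \ell$) as the original. Because $\ell$ is the closest threshold, no threshold is crossed, and the half-open bin selected by $\textsc{Encoding}_f$ — and therefore the encoded value — is unchanged.

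The remaining case is the generic integer branch, where the update places $[\tilde{\boldsymbol{x}}_i^{(t)}]_j$ at the weighted average
\begin{align*}
    [\tilde{\boldsymbol{x}}_i^{(t)}]_j = \frac{\ell_\textrm{low}\sqrt{\chi^2_{\textrm{ppf}}(1-2p''_\textrm{up})} + \ell_\textrm{up}\sqrt{\chi^2_{\textrm{ppf}}(1-2p''_\textrm{low})}}{\sqrt{\chi^2_{\textrm{ppf}}(1-2p''_\textrm{low})} + \sqrt{\chi^2_{\textrm{ppf}}(1-2p''_\textrm{up})}}
\end{align*}
of the two enclosing thresholds $\ell_\textrm{low} = \ell_\textrm{low}([\boldsymbol{x}_i^{(t)}]_j)$ and $\ell_\textrm{up} = \ell_\textrm{up}([\boldsymbol{x}_i^{(t)}]_j)$. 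The key step is to recognize this as a \emph{strict} convex combination: the restrictions \eqref{eq:ensure_low}--\eqref{eq:ensure_up} give $p''_\textrm{low}, p''_\textrm{up} \geq \alpha/2 > 0$, and for $\alpha$ in the admissible range one also has $p''_\textrm{low}, p''_\textrm{up} < 1/2$, so both quantile arguments $1-2p''$ lie in $(0,1)$ and the two weights $\sqrt{\chi^2_{\textrm{ppf}}(\cdot)}$ are strictly positive. It follows that $[\tilde{\boldsymbol{x}}_i^{(t)}]_j \in (\ell_\textrm{low}, \ell_\textrm{up})$. Since the threshold definitions give $\ell_\textrm{low} < [\boldsymbol{x}_i^{(t)}]_j \le \ell_\textrm{up}$, both the original and corrected coordinate lie in the interval $(\ell_\textrm{low}, \ell_\textrm{up}]$ that $\textsc{Encoding}_f$ maps to a single discrete value, completing the case.

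The step I expect to be the main obstacle is precisely the positivity of the two weights in the generic integer branch: the convex-combination argument collapses if either $1-2p''_\textrm{low}$ or $1-2p''_\textrm{up}$ fails to be strictly positive, since then the corresponding $\chi^2_{\textrm{ppf}}$ is zero and the center could be pushed onto a threshold. Establishing $p''_\textrm{low}, p''_\textrm{up} < 1/2$ from \eqref{eq:ensure_low}--\eqref{eq:ensure_up} under a smallness assumption on $\alpha$, and separately handling the degenerate regime noted after \eqref{eq:simult} in which $p_\textrm{mid}$ is already below $\alpha/2$ (where the variance is large enough that no fixation — and hence no encoding change — occurs), is the delicate part that keeps the argument watertight.
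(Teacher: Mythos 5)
Your proof is correct and follows essentially the same route as the paper's own: a coordinate-wise reduction, the sign-preservation argument showing the clipped (binary-type) correction never crosses its nearest threshold, and the strict-convex-combination argument placing the corrected coordinate inside $(\ell_{\textrm{low}}, \ell_{\textrm{up}})$ for the generic integer branch. If anything, you are slightly more careful than the paper, which silently folds the integer boundary sub-case into the binary argument and asserts the strict positivity of the $\sqrt{\chi^2_{\textrm{ppf}}(\cdot)}$ weights without comment, whereas you flag both explicitly.
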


\begin{proof}
We will show that it holds, for $j=1, \ldots, N$,
\begin{align}
    \textsc{Encoding}_f ([\boldsymbol{x}_i^{(t)}]_j) = \textsc{Encoding}_f ([\tilde{\boldsymbol{x}}_i^{(t)}]_j) \enspace. \label{eq:encoded_jth}
\end{align}
First, when $j = 1, \ldots, N_{\text{co}}$, it is obvious that \eqref{eq:encoded_jth} holds since $[\boldsymbol{x}_i^{(t)}]_j = [\tilde{\boldsymbol{x}}_i^{(t)}]_j$. Next, when $j = N_{\text{co}}+1, \ldots, N_{\text{co}}+N_{\text{bi}}$, 
\begin{align}
    [\tilde{\boldsymbol{x}}_i^{(t)}]_j = \ell\left([\boldsymbol{x}_i^{(t)}]_j\right) + \sign\left( [\boldsymbol{x}_i^{(t)}]_j - \ell\left([\boldsymbol{x}_i^{(t)}]_j\right) \right) \min \left\{ \left| [\boldsymbol{x}_i^{(t)}]_j - \ell\left([\boldsymbol{x}_i^{(t)}]_j\right) \right|, \textrm{CI}_j^{(t)} (1-2\alpha) \right\} \enspace.
\end{align}
If $\ell\left([\boldsymbol{x}_i^{(t)}]_j\right) < [\boldsymbol{x}_i^{(t)}]_j$,
\begin{align}
    [\tilde{\boldsymbol{x}}_i^{(t)}]_j = \ell\left([\boldsymbol{x}_i^{(t)}]_j\right) + \min \left\{ \left| [\boldsymbol{x}_i^{(t)}]_j - \ell\left([\boldsymbol{x}_i^{(t)}]_j\right) \right|, \textrm{CI}_j^{(t)} (1-2\alpha) \right\} > \ell\left([\boldsymbol{x}_i^{(t)}]_j\right) \enspace.
\end{align}
If $[\boldsymbol{x}_i^{(t)}]_j \leq \ell\left([\boldsymbol{x}_i^{(t)}]_j\right)$,
\begin{align}
    [\tilde{\boldsymbol{x}}_i^{(t)}]_j = \ell\left([\boldsymbol{x}_i^{(t)}]_j\right) - \min \left\{ \left| [\boldsymbol{x}_i^{(t)}]_j - \ell\left([\boldsymbol{x}_i^{(t)}]_j\right) \right|, \textrm{CI}_j^{(t)} (1-2\alpha) \right\} \leq \ell\left([\boldsymbol{x}_i^{(t)}]_j\right) \enspace.
\end{align}
Therefore, we have
\begin{align}
    \begin{cases}
        \ell\left([\boldsymbol{x}_i^{(t)}]_j\right) < [\tilde{\boldsymbol{x}}_i^{(t)}]_j & \text{if} \enspace \ell\left([\boldsymbol{x}_i^{(t)}]_j\right) < [\boldsymbol{x}_i^{(t)}]_j \\
        [\tilde{\boldsymbol{x}}_i^{(t)}]_j \leq \ell\left([\boldsymbol{x}_i^{(t)}]_j\right) & \text{if} \enspace [\boldsymbol{x}_i^{(t)}]_j \leq \ell\left([\boldsymbol{x}_i^{(t)}]_j\right)
    \end{cases} \enspace,
\end{align}
which shows that \eqref{eq:encoded_jth} holds when $j = N_{\text{co}}+1, \ldots, N_{\text{co}}+N_{\text{bi}}$. Finally, when $j = N_{\text{co}}+N_{\text{bi}}+1, \ldots, N$,
\begin{align}
    [\tilde{\boldsymbol{x}}_i^{(t)}]_j &= \frac{\ell_\textrm{low}\left([\boldsymbol{x}_i^{(t)}]_j\right) \sqrt{\chi^2_{\textrm{ppf}}(1-2p''_\textrm{up})} + \ell_\textrm{up}\left([\boldsymbol{x}_i^{(t)}]_j\right) \sqrt{\chi^2_{\textrm{ppf}}(1-2p''_\textrm{low})}}{\sqrt{\chi^2_{\textrm{ppf}}(1-2p''_\textrm{low})} + \sqrt{\chi^2_{\textrm{ppf}}(1-2p''_\textrm{up})}} \\
    &= \ell_\textrm{low}\left([\boldsymbol{x}_i^{(t)}]_j\right) + \left(\ell_\textrm{up}\left([\boldsymbol{x}_i^{(t)}]_j\right) - \ell_\textrm{low}\left([\boldsymbol{x}_i^{(t)}]_j\right)\right) \frac{\sqrt{\chi^2_{\textrm{ppf}}(1-2p''_\textrm{up})}}{\sqrt{\chi^2_{\textrm{ppf}}(1-2p''_\textrm{up})} + \sqrt{\chi^2_{\textrm{ppf}}(1-2p''_\textrm{low})}} \enspace.
\end{align}
Since
\begin{align}
    0 < \frac{\sqrt{\chi^2_{\textrm{ppf}}(1-2p''_\textrm{up})}}{\sqrt{\chi^2_{\textrm{ppf}}(1-2p''_\textrm{up})} + \sqrt{\chi^2_{\textrm{ppf}}(1-2p''_\textrm{low})}} < 1 \enspace,
\end{align}
we have
\begin{align}
    \ell_\textrm{low}\left([\boldsymbol{x}_i^{(t)}]_j\right) < [\tilde{\boldsymbol{x}}_i^{(t)}]_j < \ell_\textrm{up}\left([\boldsymbol{x}_i^{(t)}]_j\right) \enspace,
\end{align}
which shows that \eqref{eq:encoded_jth} holds when $j = N_{\text{co}}+N_{\text{bi}}+1, \ldots, N$. From the above, \eqref{eq:encoded_jth} holds when $j = 1, \ldots, N$, and $\textsc{Encoding}_f (\boldsymbol{x}_i^{(t)}) = \textsc{Encoding}_f (\tilde{\boldsymbol{x}}_i^{(t)})$ holds. This is the end of the proof.
\end{proof}

\begin{algorithm}[tbh]
    \caption{Single update in MO-CMA-ES with Margin}
    \label{alg:biobj-proposed}
    \begin{algorithmic}[1]
        \STATE \textbf{given} $Q^{(t)}$
        \FOR {$i = 1, \ldots, \lambda$}
            \STATE $\of[t+1]_i \leftarrow \pa[t]_i$
            \STATE $\y_i \sim \mathcal{N}(\boldsymbol{0}, \C[t]_i)$
            \STATE ${\x'}^{(t+1)}_i \leftarrow \x^{(t)}_i + {\sigma}_i^{(t)} \y_i$
            \begin{shaded}
            \STATE ${\boldsymbol{v}'}_i^{(t+1)} \leftarrow \x^{(t)}_i + {\sigma}_i^{(t)} \A[t]_i \y_i^\top$ \label{state:affine}
            \STATE ${\bar{\boldsymbol{v}'}}_i^{(t+1)} \leftarrow \textsc{Encoding}_f({\boldsymbol{v}'}_i^{(t+1)})$ \label{state:encoding}
            \end{shaded}
            \STATE $Q^{(t)} \leftarrow Q^{(t)} \cup \left\{ \of[t+1]_i \right\}$
        \ENDFOR
        \FOR {$i = 1, \ldots, \lambda$}
            \STATE $\psuof{i}{t+1} \leftarrow (1-c_p)\psuof{i}{t+1} + c_p \suc_{Q^{(t)}} \left( \pa[t]_i, \of[t+1]_i \right)$
            \STATE ${\sigma'}_i^{(t+1)} \leftarrow {\sigma'}_i^{(t+1)} \exp \left( \frac{1}{d} \frac{\psuof{i}{t+1} - p_{\suc}^{\mathrm{target}}}{1 - p_{\suc}^{\mathrm{target}}} \right)$
            \IF {$\psuof{i}{t+1} < p_{\mathrm{thresh}}$}
                \STATE $\pcof{i}{t+1} \leftarrow (1-c_c) \pcof{i}{t+1} + \sqrt{c_c(2-c_c)} \frac{{\x'}^{(t+1)}_i - \x^{(t)}_i}{{\sigma}_i^{(t)}}$
                \STATE $\Cof{i}{t+1} \leftarrow (1-c_{\mathrm{cov}})\Cof{i}{t+1} + c_{\mathrm{cov}} \pcof{i}{t+1} {\pcof{i}{t+1}}^\top $
            \ELSE
                \STATE $\pcof{i}{t+1} \leftarrow (1-c_c) \pcof{i}{t+1}$
                \STATE $\Cof{i}{t+1} \leftarrow (1-c_{\mathrm{cov}})\Cof{i}{t+1} + c_{\mathrm{cov}} \left( \pcof{i}{t+1} {\pcof{i}{t+1}}^\top + c_c (2-c_c) \Cof{i}{t+1} \right)$
            \ENDIF
            \STATE $\bar{p}_{\suc, i}^{(t)} \leftarrow (1-c_p) \bar{p}_{\suc, i}^{(t)} + c_p \suc_{Q^{(t)}} \left( \pa[t]_i, \of[t+1]_i \right)$
            \STATE ${\sigma}_i^{(t)} \leftarrow {\sigma}_i^{(t)} \exp \left( \frac{1}{d} \frac{\bar{p}_{\suc, i}^{(t)} - p_{\suc}^{\mathrm{target}}}{1 - p_{\suc}^{\mathrm{target}}} \right)$
            \begin{shaded}
            \STATE ${\x'}^{(t+1)}_i, {\boldsymbol{A}'}^{(t+1)}_i \leftarrow \textrm{MarginCorrection}\left({\x'}^{(t+1)}_i, {\boldsymbol{A}'}^{(t+1)}_i, {\sigma'}_i^{(t+1)}, \Cof{i}{t+1}\right)$ \label{state:margin_off}
            \STATE ${\x}^{(t)}_i, {\boldsymbol{A}}^{(t)}_i \leftarrow \textrm{MarginCorrection}\left( {\x}^{(t)}_i, {\boldsymbol{A}}^{(t)}_i, {\sigma}_i^{(t)}, {\boldsymbol{C}'}^{(t)}_i \right)$ \label{state:margin_par}
            \end{shaded}
        \ENDFOR
        \STATE $Q^{(t+1)} \leftarrow \left\{ Q^{(t)}_{\prec : i} \middle| 1 \leq i \leq \lambda \right\}$
    \end{algorithmic}
\end{algorithm}

Proposition~\ref{prop:after_margin_correction} shows that before and after the margin correction, the invariance of the discretized search points is preserved, and the evaluation values of each individual do not change. The ranking and contributing hypervolume of each individual also remain unchanged.

The single update in the MO-CMA-ES with margin is shown in Algorithm~\ref{alg:biobj-proposed}. See Algorithm~\ref{alg:margin} for the margin correction in lines \ref{state:margin_off} and \ref{state:margin_par}. \colorbox[rgb]{0.8, 0.8, 0.8}{Shaded areas} are modifications of the original improved MO-CMA-ES. If lines \ref{state:affine} and \ref{state:encoding} are replaced by ${\bar{\boldsymbol{v}'}}_i^{(t+1)} \leftarrow \textsc{Encoding}_f({\boldsymbol{x}'}_i^{(t+1)})$ and lines \ref{state:margin_off} and \ref{state:margin_par} are removed, the original improved MO-CMA-ES is obtained.

\section{Experiment on Multi-Objective Optimization Problems: Hyperparameter Sensitivity for $\alpha$}
\label{sec:exp_multiobjective}

In this section, we apply MO-CMA-ES with Margin to several multi-objective MI-BBO benchmark functions and validate its robustness and efficiency as same as Section~\ref{sec:exp_and_res}. 
To confirm the effect of introducing the margin, we compare the MO-CMA-ES with Margin with the (vanilla) MO-CMA-ES.
As a benchmark function involving continuous and binary variables, we consider DSLOTZ, which combines \textsc{DoubleSphere} and \textsc{LeadingOnesTailingZeros}~(LOTZ). The \textsc{DoubleSphere} function is a benchmark function to minimize two \textsc{Sphere} functions $f_1(x) = \sum_{i=1}^N x_i^2$ and $f_2(x) = \sum_{i=1}^N (1-x_i)^2$. The Pareto set of \textsc{DoubleSphere} can be expressed analytically; the line segment between the points that minimize $f_1$ and $f_2$, respectively. The LOTZ function is a benchmark function to maximize two functions $f_1(x) = \sum_{i=1}^N \prod_{j=1}^i x_j$ and $f_2(x) = \sum_{i=1}^N \prod_{j=i}^n (1-x_j)$. The first objective $f_1$ is the LeadingOnes function, which counts the number of leading ones of the bit string. On the other hand, the second objective $f_2$ counts the number of tailing zeros. The Pareto set of LOTZ consists of the bit strings ``$11...11$'',  ``$11...10$'', $\ldots$, ``$10...00$'', and ``$00...00$''.

Additionally, as a benchmark function involving continuous and integer variables, we consider \textsc{DSInt}, in which some variables of \textsc{DoubleSphere} are integerized. However, when the \textsc{DoubleSphere} function that minimizes $f_1(x) = \sum_{i=1}^N x_i^2$ and $f_2(x) = \sum_{i=1}^N (1-x_i)^2$ partially is integerized, the integer variable part of of the Pareto set can only take $0^{N_\textrm{in}}$ or $1^{N_\textrm{in}}$, making the optimization of the integer variable part too easy. Therefore, we verify the \textsc{DoubleSphere} that minimizes $f_1(x) = \sum_{i=1}^N x_i^2$ and $f_2(x) = \sum_{i=1}^N (10-x_i)^2$ with a partially discretized function.

\newpage
The definitions of the benchmark functions in this section are listed as below:
\begin{itemize}
    \item \textsc{DSLOTZ} (\textsc{DoubleSphereLeadingOnesTailingZeros})
    \begin{align}
        f_1(\bar{\x}) &= \frac{1}{N_\textrm{co}} \left( \sum_{j=1}^{N_\textrm{co}} [\bar{\x}]_j^2 \right) + \frac{1}{N_\textrm{bi}} \left( N_\textrm{bi} - \sum_{k=N_\textrm{co}+1}^{N} \prod_{l=N_\textrm{co}+1}^k [\bar{\x}]_l \right) \\
        f_2(\bar{\x}) &= \frac{1}{N_\textrm{co}} \left( \sum_{j=1}^{N_\textrm{co}} (1-[\bar{\x}]_j)^2 \right) + \frac{1}{N_\textrm{bi}} \left( N_\textrm{bi} - \sum_{k=N_\textrm{co}+1}^{N} \prod_{l=k}^N (1-[\bar{\x}]_l) \right)
    \end{align}

    \item \textsc{DSInt} (\textsc{DoubleSphereInt})
    \begin{align}
        f_1(\bar{\x}) &= \frac{1}{N_\textrm{co}} \left( \frac{1}{10^2} \sum_{j=1}^{N_\textrm{co}} [\bar{\x}]_j^2 \right) + \frac{1}{N_\textrm{in}} \left( \frac{1}{10^2} \sum_{k=N_\textrm{co}+1}^{N} [\bar{\x}]_k^2 \right) \\
        f_2(\bar{\x}) &= \frac{1}{N_\textrm{co}} \left( \frac{1}{10^2} \sum_{j=1}^{N_\textrm{co}} (10-[\bar{\x}]_j)^2 \right) + \frac{1}{N_\textrm{in}} \left( \frac{1}{10^2} \sum_{k=N_\textrm{co}+1}^{N} (10-[\bar{\x}]_k)^2 \right)
    \end{align}
\end{itemize}
In all functions, the first $N_{\mathrm{co}}$ variables are continuous wheres the last $N - N_{\mathrm{co}}$ variables are binary or integer. In all experiments, we adopt the default parameters of improved MO-CMA-ES as given in \cite{voss2010improved}, and set a reference point as $[5, 5]$.

The step-size and covariance matrix are initialized as $\sigma_{i}^{(0)} = 1$ and $\C[0]_i = \boldsymbol{I}$ for the \textsc{DSLOTZ} function and $\sigma_{i}^{(0)} = 5$ and $\C[0]_i = \boldsymbol{I}$ for the \textsc{DSInt} function.
The initial search point $\x^{(0)}_{i}$ is set to uniform random values in the range $[0, 1]$ for continuous and binary variables in the \textsc{DSLOTZ} and in the range $[0, 10]$ for continuous and integer variables in the \textsc{DSInt}. The optimization is stopped when the iteration $t$ is reached $10^{4}N$. As with Section~\ref{sec:alpha_search}, the hyperparameter $\alpha$ should depend on the number of dimensions $N$ and sample size $\lambda$. Unfortunately, unlike the original CMA-ES, MO-CMA-ES does not have a recommended value for the sample size. Therefore, we use three different sample sizes $\lambda \in \{10, 50, 100 \}$. The number of dimensions of $N$ is set to 10, 20 or 30, and $N_{\mathrm{co}} = N_{\mathrm{bi}} = N/2$. In this experiment, we evaluate hypervolume after optimization of $\alpha = N^{-m}\lambda^{-m}~(m,n \in \{0, 0.5, 1, 1.5, 2, 2.5, 3 \})$ except $\alpha = 1$. In each setting, 100 trials are performed independently using different seed values.

\begin{figure}[tbhp]
    \centering
    \includegraphics[width=\linewidth]{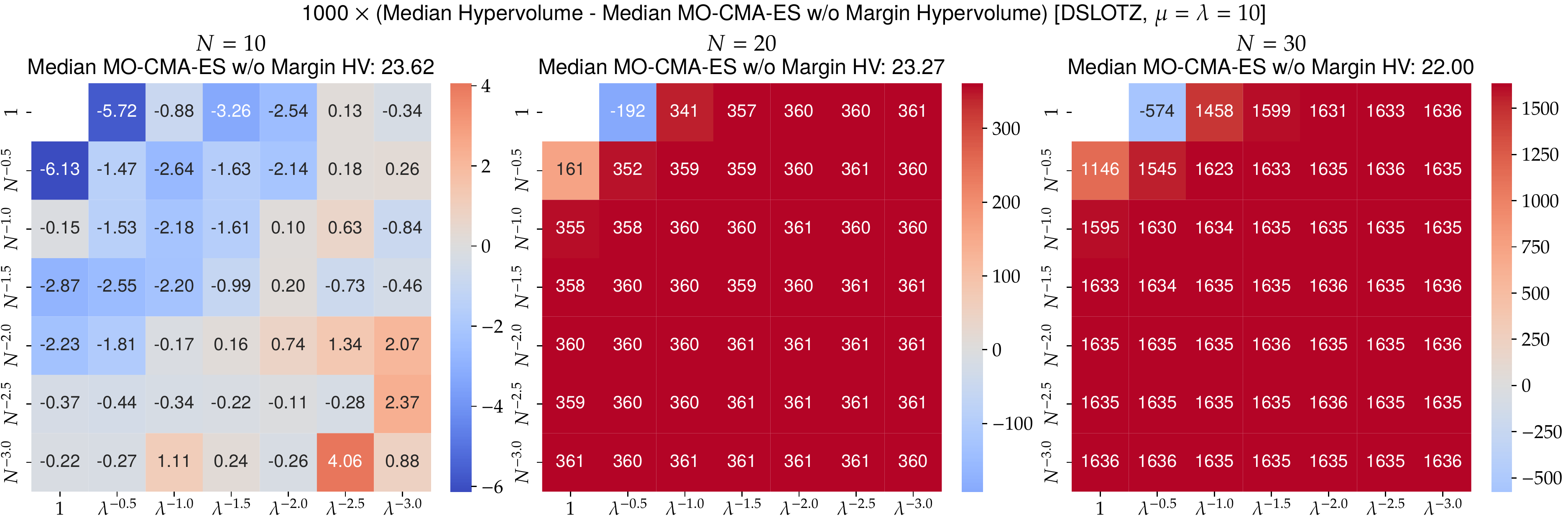}\vspace{5mm}
    \centering
    \includegraphics[width=\linewidth]{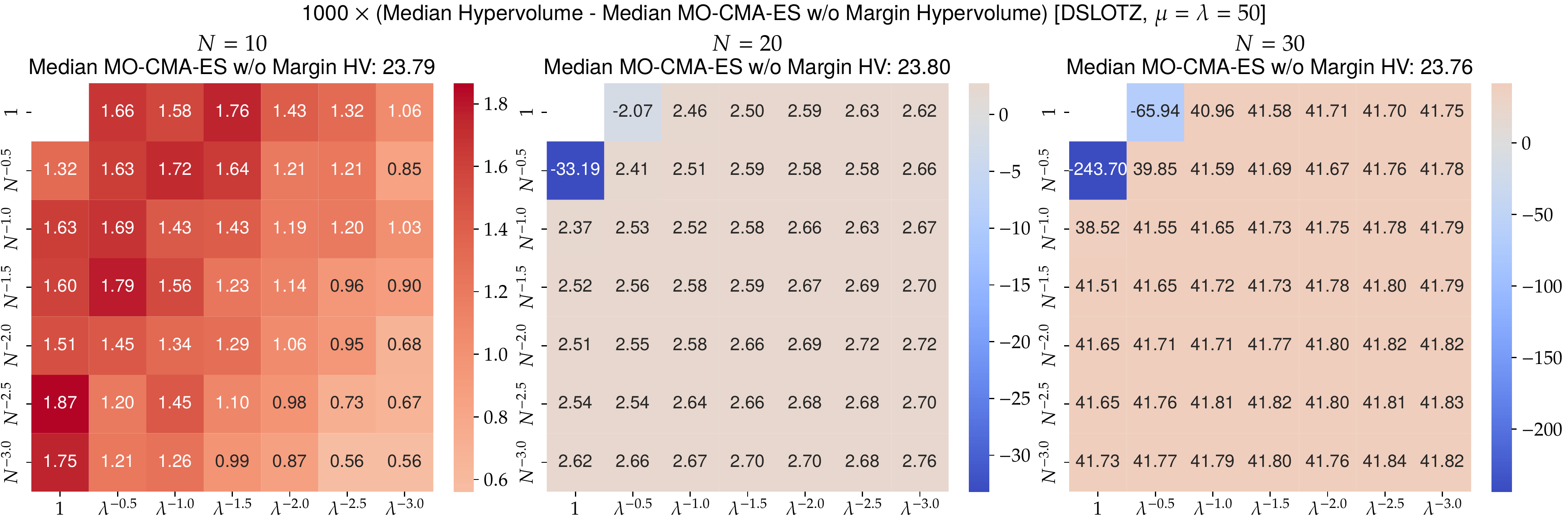}\vspace{5mm}
    \centering
    \includegraphics[width=\linewidth]{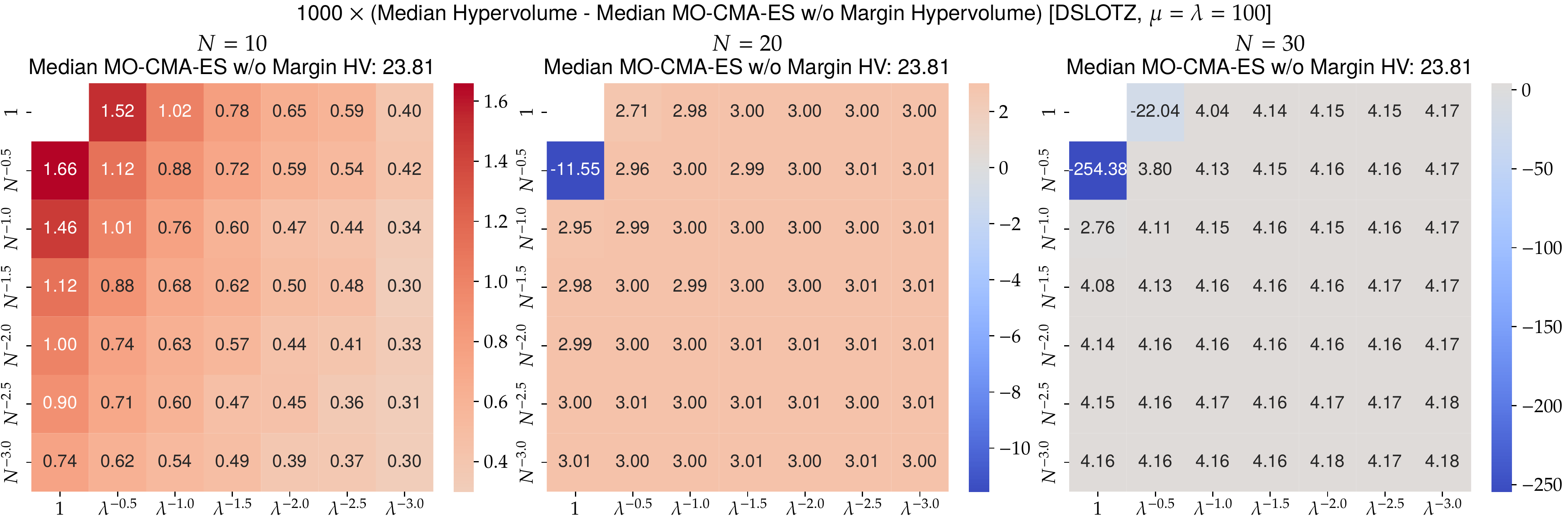}\vspace{5mm}
    \caption{Heatmap of difference in median hypervolume after optimization between MO-CMA-ES with Margin and MO-CMA-ES without margin in the $N$-dimensional \textsc{DSLOTZ} function with $\mu = \lambda = 10~ \textrm{(top)}, 50~\textrm{(middle)}, 100 ~\textrm{(bottom)}$ when the hyperparameter $\alpha=N^{-m}\lambda^{-n}$ of the MO-CMA-ES with Margin is changed. }
    \label{fig:hv_heatmap_dslotz}
\end{figure}

\begin{figure}[tbhp]
    \centering
    \includegraphics[width=\linewidth]{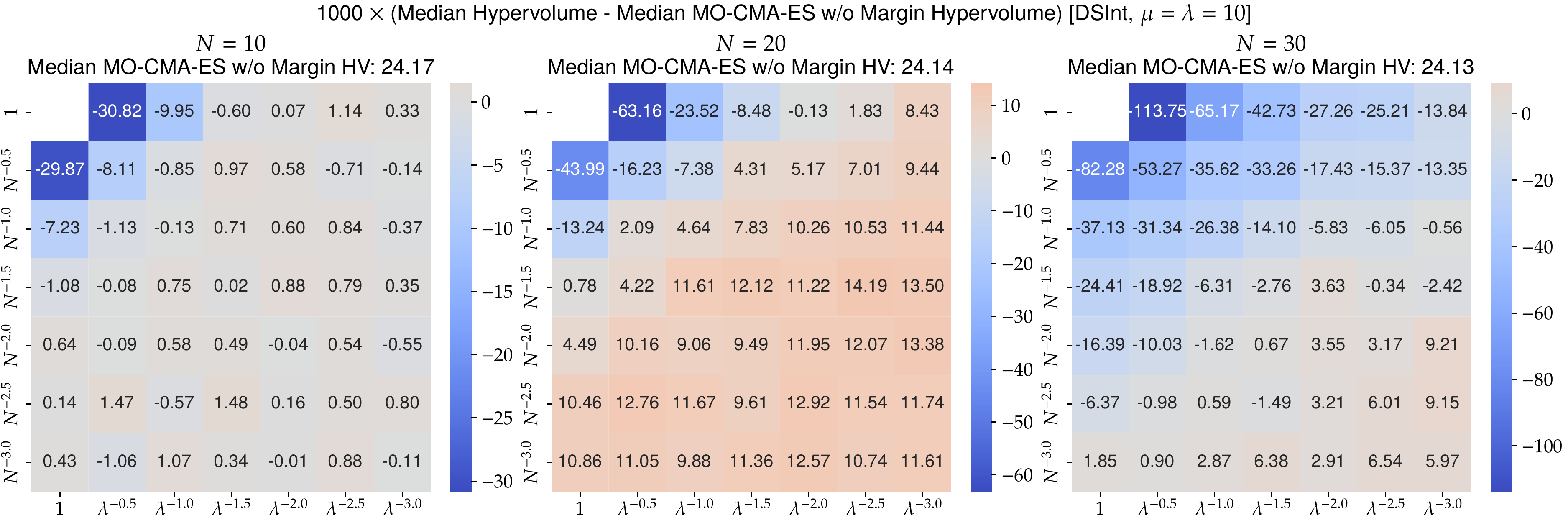}\vspace{5mm}
    \centering
    \includegraphics[width=\linewidth]{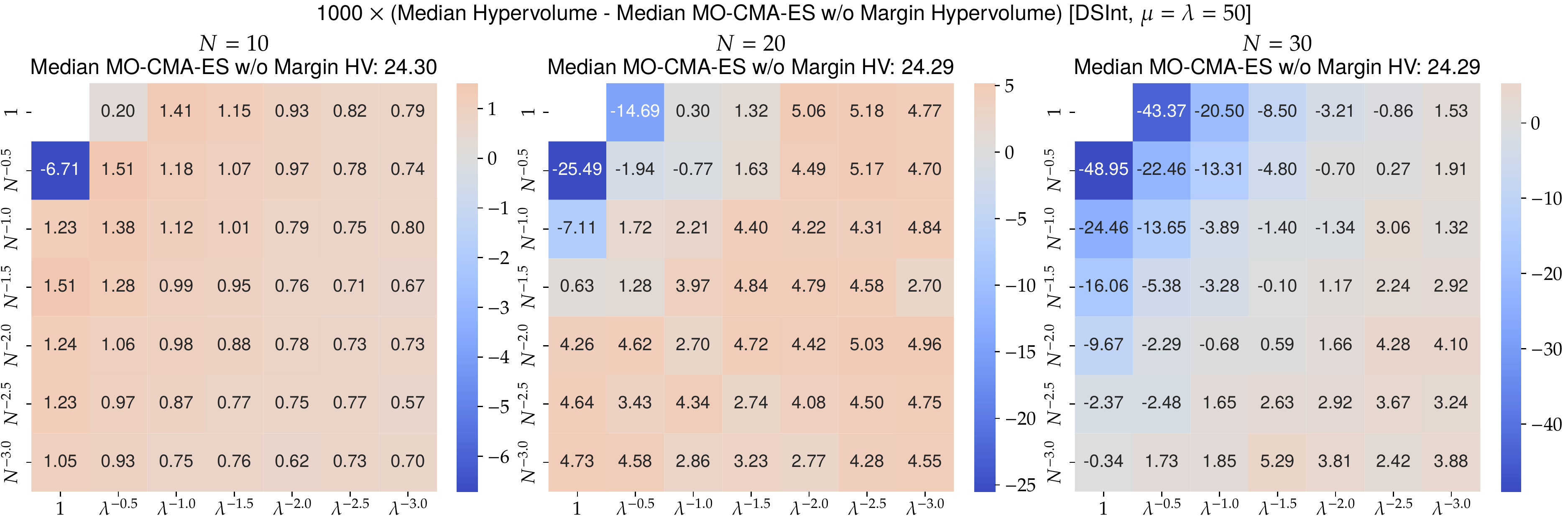}\vspace{5mm}
    \centering
    \includegraphics[width=\linewidth]{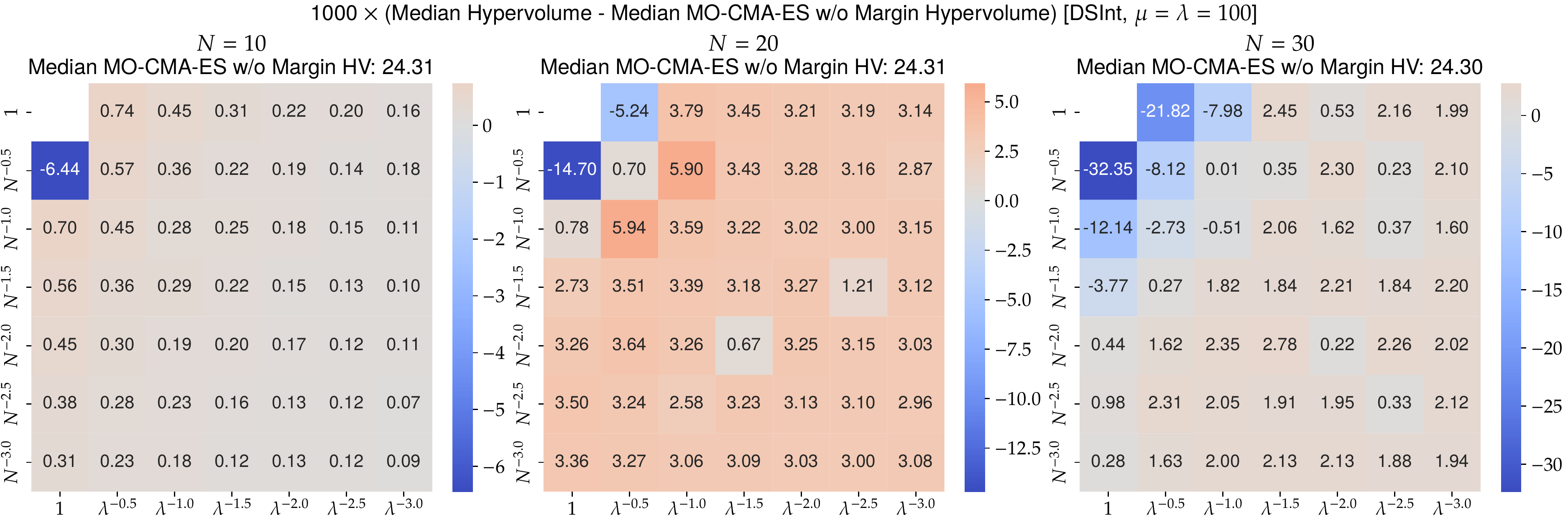}\vspace{5mm}
    \caption{Heatmap of difference in median hypervolume after optimization between MO-CMA-ES with Margin and MO-CMA-ES without margin in the $N$-dimensional \textsc{DSInt} function with $\mu = \lambda = 10~ \textrm{(top)}, 50~\textrm{(middle)}, 100 ~\textrm{(bottom)}$ when the hyperparameter $\alpha=N^{-m}\lambda^{-n}$ of the MO-CMA-ES with Margin is changed.}
    \label{fig:hv_heatmap_dsint}
\end{figure}

\paragraph{Results and Discussion}
Figure~\ref{fig:hv_heatmap_dslotz} shows the difference in median hypervolume for \textsc{DSLOTZ} function after optimization between MO-CMA-ES with Margin and MO-CMA-ES without margin for each setting. 

For $\mu = \lambda=10$ (top in Figure ~\ref{fig:hv_heatmap_dslotz}) and $N = 10$, there is no significant difference between the median hypervolumes of MO-CMA-ES and MO-CMA-ES with Margin, but MO-CMA-ES with Margin has the worst performance when $\alpha \in \{N^{-0.5}, \lambda^{-0.5} \}$. For $\mu = \lambda = 10$ and $N \in \{ 20, 30 \}$, MO-CMA-ES with Margin had worse median hypervolume than MO-CMA-ES when $\alpha = \lambda^{-0.5}$, but other settings improved it by 0.1 $\sim$ 0.3 for $N = 20$ and by more than 1 for $N = 30$.

For $\mu = \lambda \in \{ 50, 100 \}$ (middle and bottom in Figure ~\ref{fig:hv_heatmap_dslotz}) and $N = 10$, there is no significant difference between the median hypervolumes of MO-CMA-ES and MO-CMA-ES with Margin as same as the case of $\mu = \lambda = 10$. For $\mu = \lambda \in \{ 50, 100 \}$ and $N \in \{ 20, 30 \}$, MO-CMA-ES with Margin for $\alpha \in \{N^{-0.5}, \lambda^{-0.5} \}$ except for $N = 20$ has a worse median hypervolumes than MO-CMA-ES, but in other settings, MO-CMA-ES with Margin performs competitively with MO-CMA-ES.

\begin{figure}[t]
    \centering
    \includegraphics[width=\linewidth]{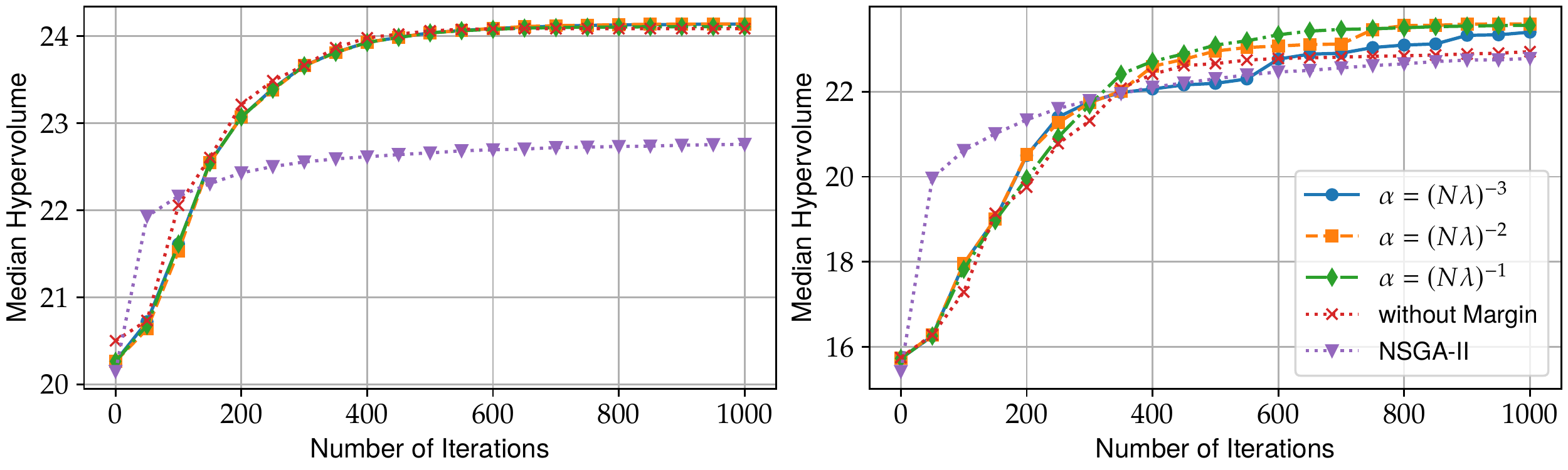}
    \caption{Transition of median hypervolumes of the first 1000 iterations on \textsc{DSInt} (left) and \textsc{DSLOTZ} (right) functions for MO-CMA-ES with and without Margin, and NSGA-\II.}
    \label{fig:hv_transition}
\end{figure}

Next, Figure~\ref{fig:hv_heatmap_dsint} shows the difference in median hypervolume for \textsc{DSInt} function after optimization between MO-CMA-ES with Margin and MO-CMA-ES without margin for each setting. The \textsc{DSInt} function also shows that MO-CMA-ES with Margin for $\alpha\in \{ N^{-0.5}, \lambda^{-0.5}\}$ has the worst hypervolume after optimization, but other settings, MO-CMA-ES with Margin performs competitively with MO-CMA-ES.

In summary, MO-CMA-ES with Margin except in the case of $\alpha\in \{N^{-0.5}, \lambda^{-0.5}\}$ can obtain the well or higher hypervolume than MO-CMA-ES. In particular, the smaller the sample size and the higher the dimension settings, the better the effect of the margin. As with the single objective, a large value such as $\alpha \in \{ N^{-0.5}, \lambda^{-0.5}\}$ is considered to have worsened the hypervolume compared to MO-CMA-ES because the search is unstable. On the other hand, for the other settings, there is no significant difference in the results of MO-CMA-ES with Margin. 

Therefore, we compare the speed of improvement of the hypervolumes for different $\alpha$ settings to determine a recommended value for $\alpha$.
Additionally, we also validate the NSGA-\II~\cite{NSGA2:2002} with the integer handling as implemented in pymoo\footnote{https://pymoo.org/}. In the NSGA-\II, the population size is set to $10$, and the initial population is generated from a uniform distribution in the same range as the case of MO-CMA-ES for fair comparison. The box constraint of range $[-5,5]$ is imposed on continuous variables in the \textsc{DSLOTZ}, and the box constraint of range $[-20,20]$ is imposed on continuous and integer variables in the \textsc{DSInt}.

Figure~\ref{fig:hv_transition} shows the comparison of the transition of median hypervolumes of the first 1000 iterations on \textsc{DSInt} and \textsc{DSLOTZ} functions of $N = 30$ for MO-CMA-ES with Margin at $\alpha \in \{ (N\lambda)^{-3}, (N\lambda)^{-2}, (N\lambda)^{-1} \}$, MO-CMA-ES, and MO-CMA-ES with and without Margin, and NSGA-\II.
The \textsc{DSInt} function shows no significant difference in hypervolume transition for different values of $\alpha$. The reason for this is the ease of obtaining a Pareto set of the \textsc{DSInt} function. In single-objective \textsc{SphereInt} function optimization, if the integer variable in any one dimension was fixed at a non-optimal integer, it was difficult to obtain an optimal solution. In the \textsc{DSInt} function, on the other hand, even if an integer variable in one dimension is fixed, one of the Pareto-optimal solutions can be obtained by optimizing the other integer variables.
On the other hand, the final hypervolume in the NSGA-{\II} is lower than that in the MO-CMA-ES. We observed that some of the final solutions took integers outside the range $[0, 10]$.

\begin{figure}[t]
    \centering
    \includegraphics[width=\linewidth]{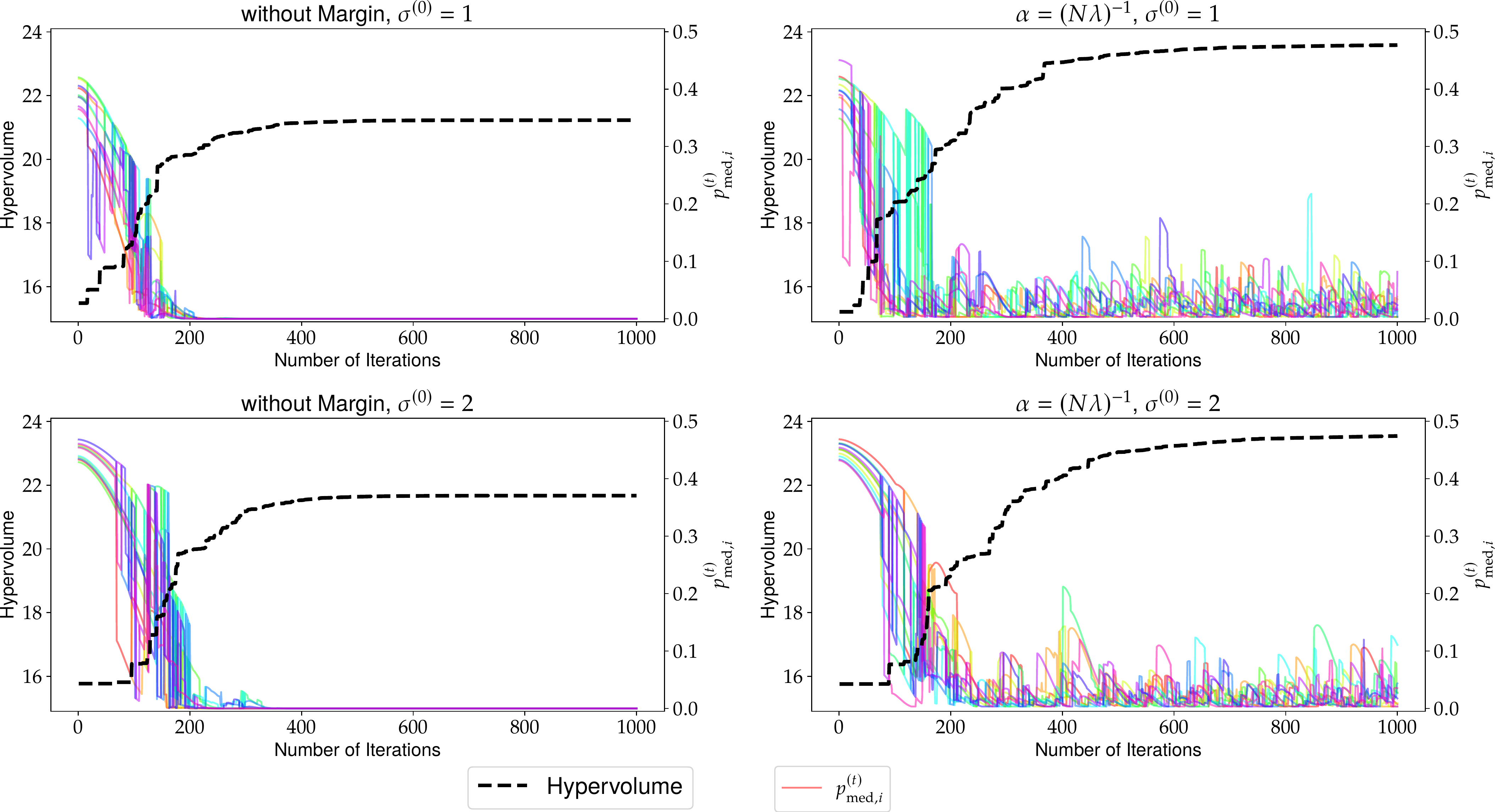}
    \caption{Transition of the hypervolume and $p_{ \mathrm{med}, i}^{(t)}$ ($i = 1, \ldots, \lambda$) in a typical run on $30$-dimensional \textsc{DSLOTZ} for MO-CMA-ES without Margin (left) and with Margin (right) when the initial step-size $\sigma^{(0)}$ is set to $1$ (top) and $2$ (bottom). The population size is set to $\mu = \lambda = 10$.}
    \label{fig:hv_prob_transition}
\end{figure}

On the other hand, in the \textsc{DSLOTZ} function, the difference in hypervolume transition is observed after the 400 iterations.
The MO-CMA-ES without Margin shows optimization has almost stalled. This is because the optimization of the binary variables is not working well due to fixation.
For $\alpha = (N\lambda)^{-3}$, the hypervolume increases slowly, but after 1000 iterations, it reaches about the same hypervolume as the other $\alpha$ settings. For $\alpha \in \{(N\lambda)^{-2}, (N\lambda)^{-1}\}$, hypervolumes of both settings increase at about the same rate, but $\alpha = (N\lambda)^{-1}$ shows a smoother increasing in hypervolume.
Figure~\ref{fig:hv_prob_transition} shows the transition of the hypervolume and $p_{\mathrm{med}, i}^{(t)}$, where $p_{ \mathrm{med}, i}^{(t)}$ is the median of
\begin{align*}
    \left\{ \min \bigl\{ \Pr([{\bar{\boldsymbol{v}'}}_i^{(t+1)}]_j = 0), \Pr([{\bar{\boldsymbol{v}'}}_i^{(t+1)}]_j = 1) \bigr\} ~ \middle| ~ j = N_\mathrm{co}+1, \ldots, N \right\} \enspace.
\end{align*}
Note that $\Pr([{\bar{\boldsymbol{v}'}}_i^{(t+1)}]_j = 0)$ and $\Pr([{\bar{\boldsymbol{v}'}}_i^{(t+1)}]_j = 1)$ are the probabilities that the $i$-th new solution has $0$ and $1$ in the $j$-th dimension in the $t$-th iteration, respectively. If $p_{\mathrm{med}, i}^{(t)}$ is close to $0$, it means that in the $i$-th individual, the probability distribution converges, and the binary variables suffer from fixation.
We can see that without the margin correction, the improvement in the hypervolume stagnates as the binary variables become fixed. This stagnation also occurs when using the large initial step size $\sigma^{(0)} = 2$.
Moreover, we found by experiments that the solution set obtained by the MO-CMA-ES without Margin and the NSGA-\II contained inferior solutions such as ``$11010...0$''.
On the other hand, with the margin correction, the probability distribution does not converge completely, and the hypervolume continues to improve.

Based on these results, we recommend $\alpha = (N\lambda)^{-1}$ for MO-CMA-ES with Margin due to its robustness and efficiency. This recommendation can be shared with the single-objective case and will be enjoyed by users because it reduces the need for tedious parameter tuning.

\section{Conclusion}
\label{sec:conclusion}
In this work, we first experimentally confirmed that the existing integer handling method, CMA-ES-IM~\cite{hansen_cma-es_2011} with or without the box constraint, does not work effectively for binary variables, and then proposed a new integer variable handling method for CMA-ES. In the proposed method, the mean vector and the diagonal affine transformation matrix for the covariance matrix are corrected so that the marginal probability for an integer variable is lower-bounded at a certain level, which is why the proposed method is called the CMA-ES with Margin; it considers both the binary and integer variables.
To demonstrate the generality of the idea of the proposed method, in addition to the single-objective optimization case, we also develop multi-objective CMA-ES with Margin (MO-CMA-ES with Margin), which is aimed at multi-objective mixed-integer optimization.
We confirmed by experiment the behavior of the MO-CMA-ES with Margin on \textsc{DoubleSphereLeadingOnesTailingZeros} and \textsc{DoubleSphereInt}, which we used as bi-objective mixed-integer benchmark problems.

The proposed method has a hyperparameter, $\alpha$, which determines the degree of the lower bound for the marginal probability. We investigated the change in the optimization performance with multiple $\alpha$ settings in order to determine the default parameter. With the recommended value of $\alpha$, we experimented the proposed method on several (MO-)MI-BBO benchmark problems.
The experimental results demonstrated that in the single-objective optimization case, the proposed method is robust even when the number of dimensions increases and can find the optimal solution with fewer evaluations than the existing method, CMA-ES-IM with or without the box constraint.
In the multi-objective optimization case, we first demonstrated the validity of the margin in the MO-CMA-ES, which employs the elitist evolution strategy.
Compared to the original MO-CMA-ES and the NSGA-\II~with the integer handling, MO-CMA-ES with Margin achieved better or at least competitive performance in the experiments with most of $\alpha$ settings.
The effect of the margin was clear, especially with the small sample size and/or the high-dimensional settings.
As a result, the proposed integer handling is effective for single-and multi-objective optimization, as well as for non-elitist and elitist strategy CMA-ES.

There are still many challenges left for the MI-BBO; for example, \citet{tusar_mixed-integer_2019} pointed out the difficulty of optimization for non-separable ill-conditioned convex-quadratic functions, such as the rotated Ellipsoid function. In the future, we need to address these issues, which have not yet been addressed by the proposed or existing methods, by considering multiple dimension correlations.
In addition to using the MGD with continuous relaxation, there is another approach to using the joint distribution of the MGD and discrete distributions such as the Bernoulli distribution. This approach requires the appropriate update balance between the MGD and the discrete distribution, and is left for future work.
Additionally, evaluating the proposed method on real-world MI-BBO problems, including multi-objective optimization, is also an important future direction.

\begin{acks}
This work was partially supported by JSPS KAKENHI Grant Number JP20H04240 and JST PRESTO Grant Number JPMJPR2133. A portion of this paper was based on the results obtained from the project JPNP18002 commissioned by the New Energy and Industrial Technology Development Organization (NEDO).
\end{acks}

\bibliographystyle{ACM-Reference-Format}
\bibliography{ref}


\end{document}